\documentclass{article}

\usepackage[preprint]{neurips_2026}

\usepackage[utf8]{inputenc} 
\usepackage[T1]{fontenc}    
\usepackage{hyperref}       
\usepackage{url}            
\usepackage{booktabs}       
\usepackage{amsfonts}       
\usepackage{nicefrac}       
\usepackage{microtype}      
\usepackage{xcolor}         

\usepackage{epigraph}
\usepackage{pifont}
\usepackage{wrapfig}
\usepackage{mathtools}
\usepackage{listings}
\usepackage{tabulary}
\usepackage[table]{xcolor}
\usepackage{tcolorbox} 
\usepackage{amsthm}
\usepackage{booktabs}
\usepackage{caption}
\usepackage{enumitem}
\definecolor{mygrey}{rgb}{0.94,0.96,0.96}

\lstdefinelanguage{json}{
    basicstyle=\ttfamily\small,
    numbers=left,
    numberstyle=\tiny,
    stepnumber=1,
    numbersep=8pt,
    showstringspaces=false,
    breaklines=true,
    frame=single,
    backgroundcolor=\color{gray!10},
    stringstyle=\color{green!50!black},
    keywordstyle=\color{blue},
    commentstyle=\color{gray},
    morestring=[b]",
}

\newtcolorbox{llmoutput}{
  colback=white,
  colframe=black,
  boxrule=0.4pt,
  arc=2pt,
  left=6pt,
  right=6pt,
  top=4pt,
  bottom=4pt
}

\definecolor{myblue}{HTML}{3A7CA5}
\definecolor{myred}{HTML}{A94442}

\newtheorem{innercustomlemma}{Lemma}
\newenvironment{customlemma}[1]
  {\renewcommand\theinnercustomlemma{#1}\innercustomlemma}
  {\endinnercustomlemma}

\newtheorem{lemma}{Lemma}

\DeclareMathOperator*{\supp}{supp}

\title{The Attacker in the Mirror: Breaking Self-Consistency in Safety via Anchored Bipolicy Self-Play}

\author{%
  Gabriele La Malfa$^{1}$\thanks{Equal contribution. Correspondence to \texttt{gabriele.la\_malfa{@}kcl.ac.uk} and \texttt{emanuele.lamalfa{@}cs.ox.ac.uk}.} \quad 
  Emanuele La Malfa$^{2,4}$\textsuperscript{$*$} \quad 
  Saar Cohen$^{2}$ \\
  \textbf{Jie M. Zhang}$^{1}$ \quad 
  \textbf{Michael Luck}$^{3}$ \quad
  \textbf{Michael Wooldridge}$^{2}$ \quad
  \textbf{Elizabeth Black}$^{1}$ \\
  $^{1}$Department of Informatics, King’s College London \\
  $^{2}$Department of Computer Science, University of Oxford \\
  $^{3}$University of Sussex, $^{4}$Institute for Decentralized AI
}

\begin{document}

\maketitle

\vspace{-0.8cm}
\begin{center}
    \includegraphics[height=1em]{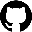} \href{https://github.com/EmanueleLM/AnchoredBipolicySelf-Play}{\texttt{Github}} |  \includegraphics[height=1em]{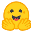} \href{https://huggingface.co/EmanueleLaMalfa/AnchoredBipolicySelf-Play/tree/main}{\texttt{Checkpoints}}
\end{center}

\begin{abstract}
Self-play red team is an established approach to improving AI safety in which different instances of the same model play attacker and defender roles in a zero-sum game, i.e., where the attacker tries to jailbreak the defender; if self-play converges to a Nash equilibrium, the model is guaranteed to respond safely within the settings of the game. Although the parameter sharing enforced by the use of the same model for the two roles improves stability and performance, it introduces fundamental theoretical and architectural limitations. We show that the set of Nash equilibria that can be reached corresponds to a broad class of behaviours that includes trivial always refuse strategies and oracle-like defenders, thus limiting practical applicability. We then show that when attacker and defender share and update the same base model, the dynamics collapse to self-consistency, so that attacks do not enforce adversarial pressure on the defender.
In response, we propose Anchored Bipolicy Self-Play, which trains distinct role-specific LoRA adapters on top of a frozen base model, thereby maintaining stable optimisation while preserving adversarial pressure through explicit role separation. In relation to standard self-play, we show up to 100x greater parameter efficiency than fine-tuning and consistent improvements in safety compared to self-play fine-tuned models. We evaluate on Qwen2.5-\{3B, 7B,14B\}-IT models across widely used safety benchmarks, showing improved robustness without loss of reasoning ability. Cross-play experiments further show that our attacker and defender models are superior to self-play in terms of adversarial defence and safety.

\end{abstract}

\epigraph{\emph{``A slippery defender is a dangerous defender.''}}
{--- Carlo Mazzone (\emph{circa} 1992 CE)}

\vspace{-0.5cm}
\section{Introduction}
While concerns surrounding the safety of NLP models have long been recognised~\cite{arras2016explainingpredictionsnonlinearclassifiers}, Large Language Models (LLMs)~\cite{brown2020languagemodelsfewshotlearners}, given their capabilities and widespread adoption, pose significant security and safety risks that have the potential to erode trust in these systems~\cite{lamalfa2023languagemodelsserviceoverview}, particularly with the rise of LLM agents and agentic applications~\cite{dewitt2025openchallengesmultiagentsecurity}.
Although safeguarding generated outputs is critical for preventing misuse, recent developments reveal ongoing and substantial obstacles to achieving dependable LLM safety~\cite{xu-etal-2024-comprehensive}.
Model patching to prevent jailbreaks constitutes one mitigation strategy~\cite{xiong2025defensivepromptpatchrobust}. More recently, self-play reinforcement learning, which engages multiple models in iterative attack-and-defence cycles, has shown promise in improving model robustness~\cite{liu2025chasingmovingtargetsonline}.

As Figure~\ref{fig:ipsp_architecture} (centre) illustrates, in the Self-Play Red-Teaming (\textsc{Self-RedTeam}) framework by Liu et al.~\cite{liu2025chasingmovingtargetsonline}, a single model is fine-tuned to assume both the attacker role, attempting to jailbreak the defender, and the defender role, aiming to prevent such breaches. The attacker is rewarded for eliciting harmful responses, whilst the defender succeeds by refusing harmful requests or answering benign requests. In this configuration, private reasoning traces are hidden, requiring agents to infer opponent intent exclusively from observable actions.

\begin{figure*}
    \centering
    \includegraphics[width=1\textwidth]{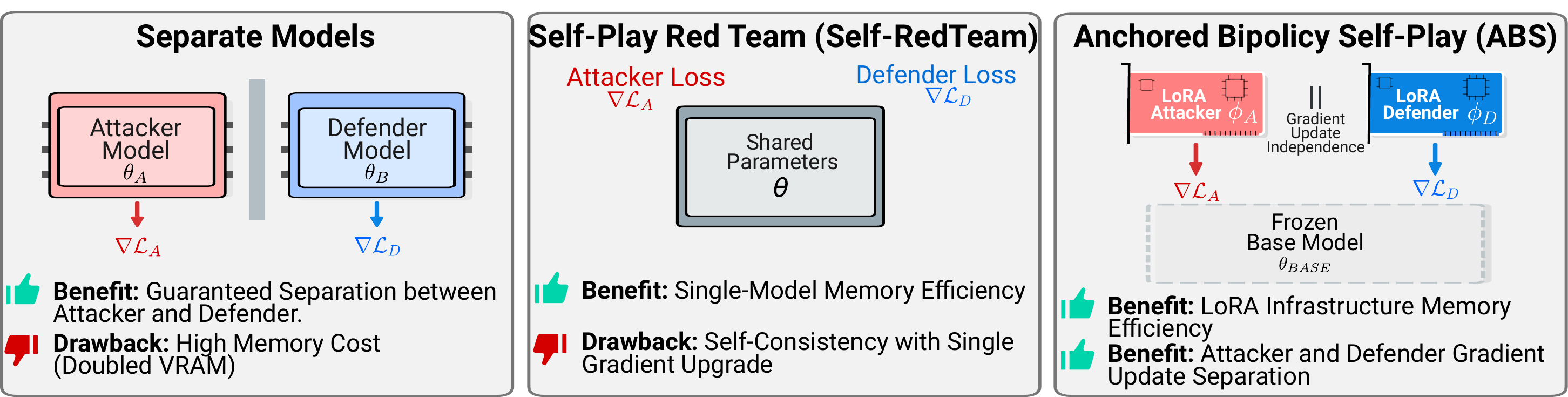}
    \caption{Architectural Comparison of Self-Play Red Team Frameworks.
    \textbf{(Left) Full Model Separation}: Two distinct models guarantee independence and adversarial pressure, but lead to high computational cost. 
    \textbf{(Middle) Single-Model Coupling (\textsc{Self-RedTeam}~\cite{liu2025chasingmovingtargetsonline})}: Attacker and Defender share the same parameters. Gradient updates for the attacker shift the defender's decision boundaries, resulting in self-consistency and weakening the discovery of new vulnerabilities.
    \textbf{(Right) Anchored Bipolicy Self-Play (ABS, Ours)}: We freeze the base model and evolve separate LoRA adapters for the attacker and defender, restoring adversarial pressure with minimal memory cost.}
    \label{fig:ipsp_architecture}
\end{figure*}

Although this approach can be framed as an imperfect-information game, current frameworks are limited by fundamental architectural and theoretical constraints. 
First, we show that relying on convergence to a Nash equilibrium as a safety guarantee admits equilibria that include trivial always-refuse strategies or oracle-like defenders, limiting practical applicability.
Moreover, full parameter sharing, while improving training stability, introduces a strong dependency. In particular, we show that the min-max safety game of Liu et al.~\cite{liu2025chasingmovingtargetsonline} induces a self-consistency bias that potentially reduces the adversarial pressure on the defender.

To re-establish adversarial integrity within the safety game, this work introduces Anchored Bipolicy Self-Play (ABS). The ABS framework enforces strict architectural separation between attacker and defender while maintaining the training stability of self-play (Figure~\ref{fig:ipsp_architecture}, right). By freezing the base model, ABS employs Low-Rank Adaptation (LoRA) to train distinct, lightweight adapter modules for each role, guaranteeing gradient update separation and a substantial reduction of the computational costs compared to the fully fine-tuned state-of-the-art \textsc{Self-RedTeam} and the base model.

ABS matches or outperforms both the base models and the fully fine-tuned \textsc{Self-RedTeam} across multiple safety benchmarks while preserving reasoning and instruction-following abilities. Our method is particularly effective at smaller scales, where it provides significant performance gains over existing self-play frameworks while remaining competitive with fully fine-tuned models at the 14B scale.
We further assess attacker strength in two settings: a cross-play tournament, where ABS attackers and \textsc{Self-RedTeam} defenders of equal size are evaluated against each other, and a transfer setting, where both act as attackers against the base Qwen model. In both cases, ABS attackers outperform \textsc{Self-RedTeam}, achieving an attack success rate that is around $10\%$ higher in cross-play and $15\%$ higher when targeting the same base model.

The contributions of this work are threefold:

\begin{itemize}
    \item We show that the zero-sum formulation of self-play red-teaming allows for a broad class of indistinguishable Nash equilibria, which include both trivial ``always-refuse'' strategies and optimal oracle defenders. This result provides a theoretical explanation for why convergence alone does not ensure model utility.
    \item We identify and geometrically formalise the ``self-consistency'' phenomenon inherent in shared-parameter architectures. Our analysis demonstrates that coupled gradient updates suppress adversarial pressure and hinder the exploration of worst-case prompts.
    \item We propose Anchored Bipolicy Self-Play (ABS), a parameter-efficient framework that employs role-specific LoRA adapters to maintain architectural separation. This approach restores adversarial integrity by enabling independent evolution of the attacker and defender, while also substantially reducing the computational cost associated with self-play training.
\end{itemize}

The paper is organised as follows: we first review the relevant literature in reinforcement learning for safety, self-play, and parameter-efficient training. We then show how the analysis of safety via Nash equilibria is incomplete, and that the current formulation of \textsc{Self-RedTeam} reduces to self-consistency. We then introduce our technique (ABS), and conduct the experiments to prove its performance and scalability. All the results are available \href{https://github.com/EmanueleLM/AnchoredBipolicySelf-Play}{[\textcolor{myblue}{here}]}.

\section{Related Work}

\textbf{Safety, Jailbreaking, and Red-Teaming.}
Standard red team training relies on human feedback to identify harmful responses and reinforce model behaviour accordingly~\cite{bai2022traininghelpfulharmlessassistant, dai2023saferlhfsafereinforcement}. Owing to the limited coverage of this approach, models are often encouraged to explore novel attack prompts~\cite{perez2022redteaminglanguagemodels}. For instance, Mehrotra et al.~\cite{mehrotra2024treeattacksjailbreakingblackbox} propose a pruning strategy to pre-select attacks with higher jailbreak potential. Hong et al.~\cite{hong2024curiositydrivenredteaminglargelanguage} instead optimise jointly for reward and novelty, incentivising the generation of both successful and previously unseen prompts in a curiosity-driven setting.
Other lines of work focus on co-evolving attacker and defender policies to avoid static adversaries~\cite{jain2023baselinedefensesadversarialattacks}. A central challenge in this setting is computational cost~\cite{howe2025scalingtrendslanguagemodel}. Xhonneux et al.~\cite{xhonneux2024efficientadversarialtrainingllms} address this by operating in the continuous embedding space, reducing the expense associated with discrete token-level attacks. Liu et al.~\cite{liu2025chasingmovingtargetsonline} introduce an online self-play framework in which a single model instantiates both attacker and defender, enabling mutual adaptation. More recently, Wen et al.~\cite{wen2026magiccoevolvingattackerdefenderadversarial} propose MAGIC, a decoupled co-evolving formulation designed to mitigate optimisation conflicts through full parameter updates. Tan et al.~\cite{tan2026triplayrltriroleselfplayreinforcement} extend this line of work with a tri-role framework that incorporates an additional evaluator to stabilise training via multi-agent interaction.

\textbf{Parameter-Efficient Fine-Tuning Techniques.}
Parameter-Efficient Fine-Tuning (PEFT) techniques have demonstrated effectiveness in model training while offering significant computational efficiency compared to full model fine-tuning. Among these methods, LoRA~\cite{hu2021loralowrankadaptationlarge} applies a low-rank update to the weights while keeping the base model parameters fixed. For instance, in transformer architectures that underpin modern large language models (LLMs), LoRA modifies the attention weights while leaving the multi-layer perceptron weights unchanged. 
Further variations of LoRA work on either reducing the computational learning costs, such as QLoRA~\cite{dettmers2023qloraefficientfinetuningquantized}, or dynamically modifying the dimension of the weight matrices (rank) to improve critical learning momenta, like AdaLoRA~\cite{zhang2023adaloraadaptivebudgetallocation}. SaLoRA~\cite{li2025salorasafetyalignmentpreservedlowrank} instead proposes a method to improve safety alignment of LLMs. Finally, in line with this paper, Red-Bandit implements a LoRA-based post-training to reduce LLMs' vulnerabilities to attacks~\cite{ziakas2025redbandittesttimeadaptationllm}.     

\textbf{Summary and Research Gap.}
\textsc{Self-RedTeam}~\cite{liu2025chasingmovingtargetsonline} serves as the primary benchmark as it established the foundational setting for online symmetric self-play in LLMs’ safety, where a single model alternates between attacker and defender roles. Recent literature has shifted toward more complex co-evolutionary games, such as MAGIC~\cite{wen2026magiccoevolvingattackerdefenderadversarial}, which proposes role decoupling through full-parameter updates, and TriPlay-RL~\cite{tan2026triplayrltriroleselfplayreinforcement}, which introduces an auxiliary ``Evaluator'' role to stabilise co-evolution. Unlike our work, they do not provide a theoretical treatment of the gradient collision and self-consistency bias inherent in shared-parameter architectures. ABS fills this gap by introducing a parameter-efficient method that restores adversarial integrity through architectural separation.

\section{Nash Equilibria, Model Safety, and Self-Play}
\textsc{Self-RedTeam} by Liu et al.~\cite{liu2025chasingmovingtargetsonline} proposes model safety via self-play, wherein a single base model jointly instantiates an attacker $A$ (issuing jailbreak prompts) and a defender $D$ (responding). Training proceeds online in a sequential zero-sum game, where a Nash equilibrium corresponds to safety (see \S\ref{sec:Nash Equilibria in Self-Play Red-Teaming}). Despite growing uptake~\cite{wen2026magiccoevolvingattackerdefenderadversarial} and extensions to general-sum settings~\cite{liao2024efficacy}, we show that (i) the \textsc{Self-RedTeam} formulation admits indistinguishable equilibria spanning trivial always-refusal and oracle defenders (see \S\ref{sec:Analysis of Extremal Nash Equilibria}); and (ii) the objective is more naturally cast as single-model self-consistency, with implications for training dynamics and equilibria (see \S\ref{sec:Min-Max Constraint Conflict and Self-Consistency}).

\subsection{Nash Equilibria in \textsc{Self-RedTeam}}
\label{sec:Nash Equilibria in Self-Play Red-Teaming}
Let $\mathcal{Y}_A$ be the set of attacker prompts and $\mathcal{Y}_D$ be the set of defender responses.
For each player $P \in \{A,D\}$, we define the probability simplex over $\mathcal{Y}_P$ as $\Delta(\mathcal{Y}_P)$. In the two-player game formulation of red-teaming, the attacker first proposes a prompt $y_A$ sampled from a mixed strategy $\pi_A \in \Delta(\mathcal{Y}_A)$, referred to as a policy. Given the prompt $y_A$, the defender then generates a response $y_D$ sampled from a conditional strategy $\pi_D(\cdot \mid y_A)$ with $\pi_D: \mathcal{Y}_A  \rightarrow \Delta(\mathcal{Y}_D)$.\footnote{Without loss of generality, we omit that in \textsc{Self-RedTeam}, the attack is initially conditioned on a \texttt{seed}, while the attacker's and defender's policies are further conditioned on the instructions to generate the attack and the defence, respectively.}
Afterwards, a reward model $r:\mathcal{Y}_A \times \mathcal{Y}_D \to [-1,1]$ evaluates each prompt-response pair, i.e., for any such pair $(y_A,y_D)$, the defender's reward is $r(y_A,y_D)$, and the attacker receives the opposite payoff $-r(y_A,y_D)$.

For an attacker and a defender that aim to maximise their rewards, this setting corresponds to a zero-sum sequential game where (i) unsafe responses are strictly penalised, i.e., $r(y_A,y_D) = -1$ if $(y_A,y_D)$ is unsafe; (ii) safe responses are rewarded non-negatively, i.e., $r(y_A,y_D) \ge 0$ if $(y_A,y_D)$ is safe, and (iii) there exists a refusal response $y^{\mathrm{ref}}\in\mathcal{Y}_D$ such that $r(y_A,y^{\mathrm{ref}})=0$ for all $y_A\in\mathcal{Y}_A$.

Under these assumptions, the expected reward is $V(\pi_A,\pi_D)=\mathbb{E}_{y_A\sim\pi_A,\;y_D\sim\pi_D(\cdot\mid y_A)} \big[ r(y_A,y_D) \big]$ and the game admits the following min-max representation:
\[
\min_{\pi_A}\max_{\pi_D} V(\pi_A,\pi_D).
\]

A Nash equilibrium $(\pi_A^\ast,\pi_D^\ast)$ is thus a pair of strategies such that neither player has an incentive to deviate, i.e., $V(\pi_A^\ast,\pi_D^\ast) \ge V(\pi_A^\ast,\pi_D)$ for all $\pi_D$, and $V(\pi_A^\ast,\pi_D^\ast) \le V(\pi_A,\pi_D^\ast)$ for all $\pi_A$. When the two players' policies converge to a Nash equilibrium $(\pi_A^\ast,\pi_D^\ast)$, Liu et al. \cite{liu2025chasingmovingtargetsonline} proved that the defender guarantees a safe response to any attacker prompt, i.e., $r(y_A, \pi_D^\ast(y_A)) \ge 0$ for any $y_A$.

We show that, under the above assumptions, extremal Nash equilibria attaining the minimum or maximum defender expected payoff are problematic in both theory and practice (see \S\ref{sec:Analysis of Extremal Nash Equilibria}). On the one hand, equilibria that are qualitatively distinct yet indistinguishable to the learning dynamics offer limited practical guidance. On the other hand, when a single model instantiates both attacker and defender, the dynamics can induce implicit coordination or collusion between roles (see \S\ref{sec:Min-Max Constraint Conflict and Self-Consistency}).

\subsection{Analysis of Extremal Nash Equilibria}
\label{sec:Analysis of Extremal Nash Equilibria}
In this section, we derive a ``lower bound'' extremal that corresponds to the case where the defender always refuses, guaranteeing safety but compromising usefulness. Complementarily, we characterise the ``upper bound'' extremal that corresponds to an oracle-like defender that always provides safe and maximally helpful responses.

\paragraph{Bound 1: An always-refuse equilibrium.}
Recall our assumption that the defender can always issue a refusal response $y^{\mathrm{ref}}\in\mathcal{Y}_D$ such that $r(y_A,y^{\mathrm{ref}})=0$ for all $y_A\in\mathcal{Y}_A$. Thus, for any prompt $y_A$, we define the always-refuse defender strategy as $\pi_D^{\mathrm{ref}}(y_D \mid y_A) = \mathbf{1}\{y_D = y^{\mathrm{ref}}\}$, which yields an expected reward $V(\pi_A, \pi_D^{\mathrm{ref}}) = 0$ for any attacker policy $\pi_A$. 

A pair $(\pi_A^\ast, \pi_D^{\mathrm{ref}})$ is a Nash equilibrium if the attacker policy $\pi_A^\ast$ is sufficiently adversarial such that no defender policy achieves a strictly positive expected payoff, i.e., $\max_{\pi_D} V(\pi_A^\ast, \pi_D)=0$. We therefore focus on the attacker policies in a local neighbourhood of a reference adversarial policy $\bar{\pi}_A$, which concentrates on prompts where any non-refusal response yields no positive reward, and where nearby policies preserve this property (see Appendix~\ref{appendix:TV inequality}). This reflects the intuition that attacks may adapt while remaining close to a reference distribution, ensuring semantic consistency.

Within this neighbourhood, we prove that deviations in the attacker policy only induce controlled changes in the defender’s payoff (Lemma \ref{lemma:TV inequality}). We thus focus on attacker policies $\pi_A^\ast$ that preserve the adversarial property of $\bar{\pi}_A$ locally, in the sense that they do not admit any defender strategy with strictly positive expected payoff, i.e., we require $\mathbb{E}_{y_A \sim \pi_A^\ast, y_D \sim \pi_D'}[r(y_A, y_D)] \le 0$ for any alternative defender strategy $\pi_D'$.
This condition can be interpreted as a \textit{worst-case} constraint on the attacker: within the local neighbourhood, any attempt by the defender to provide a helpful response yields no positive gain and may incur risk, so the defender has no incentive to deviate from $y^{\mathrm{ref}}$. Specifically, providing a helpful response risks a payoff of $-1$ that is not offset by safe, high-reward completions, making the expected value of answering non-positive. Hence, $\pi_D^{\mathrm{ref}}$ is a best response to $\pi_A^\ast$. Further, as the attacker cannot reduce the payoff below $0$ against a refusing defender, $\pi_A^\ast$ is a best response.

\paragraph{Bound 2: The defender acts like an oracle.}
Let $\mathcal{M} \subseteq \mathcal{Y}_A$ be the set of malicious prompts, for which the maximum achievable safe reward is zero. That is, $\max_{y_D} r(y_A,y_D)=0$ for any $y_A \in \mathcal{M}$ and $\max_{y_D} r (y_A,y_D)>0$ for any $y_A \notin \mathcal{M}$. Assume there exists an oracle defender policy $\pi_D^{\mathrm{rec}}$ that is pointwise optimal: for every $y_A \in \mathcal{Y}_A$, it picks a response $y_D$ to maximize $r(y_A, y_D)$. Namely, for all $y_A \in \mathcal{M}$, $\pi_D^{\mathrm{rec}}$ ensures safety by generating a response $y_D^{\mathrm{safe}}$ with $r(y_A, y_D^{\mathrm{safe}}) = 0$, and for $y_A \notin \mathcal{M}$, it provides a maximally helpful response $y_D$ with $r(y_A, y_D) > 0$.

Therefore, a pair $(\pi_A^\ast, \pi_D^{\mathrm{rec}})$ is a Nash equilibrium if the attacker's policy $\pi_A^\ast$ minimises the expected reward against the oracle. Formally, $\pi_A^\ast$ must satisfy:
\begin{equation*}
    \pi_A^\ast \in \arg\min_{\pi_A} \mathbb{E}_{y_A \sim \pi_A} \left[ \max_{\pi_D} \mathbb{E}_{y_D \sim \pi_D(\cdot|y_A)}\left[r(y_A, y_D)\right] \right] = \arg\min_{\pi_A} \mathbb{E}_{y_A \sim \pi_A} \left[ \max_{y_D} r(y_A, y_D) \right].
\end{equation*}
where the last equality follows from $\max_{\pi_D} \mathbb{E}_{y_D \sim \pi_D(\cdot|y_A)}\left[r(y_A, y_D)\right]=\max_{y_D} r(y_A, y_D)$ since the expected reward is linear in the mixed strategy $\pi_D(\cdot|y_A) \in \Delta(\mathcal{Y}_D)$. Thus, the maximum over the simplex $\Delta(\mathcal{Y}_D)$ it is attained at a vertex, meaning that the optimal \textit{mixed} strategy simply places all probability mass on the best \textit{pure} strategy $y_D$ maximizing $r(y_A,y_D)$.

As $\max_{y_D} r(y_A, y_D) = 0$ for all $y_A \in \mathcal{M}$, and $\max_{y_D} r (y_A,y_D)>0$ for any $y_A \notin \mathcal{M}$, it follows that any equilibrium attacker policy $\pi_A^\ast$ must be supported on $\mathcal{M}$, i.e., $\supp(\pi_A^\ast) \subseteq \mathcal{M}$, where $\supp(\pi_A^\ast) := \{y_A \in \mathcal{Y}_A : \pi_A^\ast(y_A) >0\}$ denotes the support of $\pi_A^\ast$. At such an equilibrium, the attacker cannot further reduce the payoff, as safety is guaranteed, while the defender cannot improve its payoff on malicious inputs without violating the safety constraint and incurring a $-1$ penalty.

\begin{tcolorbox}[
    colback=myblue!5!white,
    colframe=myblue,
    arc=1mm,
    boxrule=1pt,
    title=\textbf{Nash equilibria and Safety Guarantees},
]
The extremal Nash equilibria with respect to the defender's payoff are:
\begin{itemize}[noitemsep,topsep=1pt,parsep=1pt,partopsep=0pt]
    \item A degenerate \emph{always-refuse} equilibrium (safe but unhelpful);
    \item An \emph{oracle} equilibrium (safe and useful), when such a policy exists.
\end{itemize}

\vspace{1mm}
\textbf{Conclusion:} The theoretical framework admits safety-preserving equilibria; however, such equilibria range from a trivial \emph{always-refuse} solution to an \textit{oracle defender}.
\end{tcolorbox}

\subsection{Min-Max Constraint Conflict and Self-Consistency}
\label{sec:Min-Max Constraint Conflict and Self-Consistency}
Assume now that the attacker and defender share parameters, i.e., there exists a single policy $\pi_\theta$ such that $\pi_A = \pi_D = \pi_\theta$.
Under this constraint, the optimisation problem involves a single parameter vector $\theta$ balancing two conflicting objectives:
\begin{enumerate}[noitemsep,topsep=1pt,parsep=1pt,partopsep=0pt]
    \item \textbf{Defender Role:} Maximize $V$ (generate safe responses).
    \item \textbf{Attacker Role:} Minimize $V$ (generate successful jailbreaks).
\end{enumerate}
Since the parameters are shared, the attacker cannot adapt independently without simultaneously altering the defender. In the zero-sum red-teaming game, the attacker’s and defender’s rewards for any prompt-response pair $(y_A,y_D)$ are $R_A(y_A,y_D):=-r(y_A,y_D)$ and $R_D(y_A,y_D):=r(y_A,y_D)$, respectively. We denote the defender’s and attacker’s objectives as follows (respectively):
\[
J_D(\theta) \coloneqq \mathbb{E}_{y_A \sim \pi_\theta,\; y_D \sim \pi_\theta(\cdot \mid y_A)}[r(y_A,y_D)],
\quad
J_A(\theta) \coloneqq \mathbb{E}_{y_A \sim \pi_\theta,\; y_D \sim \pi_\theta(\cdot \mid y_A)}[-r(y_A,y_D)]
\]
The resulting policy-gradient update is given by the sum of the defender and attacker gradients:
\begin{equation}\label{GradientOpposingForces}
\begin{aligned}
    \nabla_\theta J_D(\theta) + \nabla_\theta J_A(\theta)
    &=
    \mathbb{E}_{y_A,y_D}
    \Big[
    r(y_A,y_D)
    \big(
    \underbrace{\nabla_\theta \log \pi_\theta(y_D \mid y_A)}_{\text{Force 1: Improve Defence}}
    -
    \underbrace{\nabla_\theta \log \pi_\theta(y_A)}_{\text{Force 2: Improve Attacks}}
    \big)
    \Big].
\end{aligned}
\end{equation}
To make the coupling explicit, let $\eta > 0$ denote the learning rate and consider a defender-only step, i.e., $\theta^+ = \theta + \eta\, \nabla_\theta J_D(\theta)$. Then, a first-order Taylor expansion of the attacker objective yields
\begin{equation}\label{eq:interference}
    J_A(\theta^+) = J_A(\theta) + \eta \left\langle \nabla_\theta J_A(\theta),\, \nabla_\theta J_D(\theta) \right\rangle + O(\eta^2).
\end{equation}

Thus, at first order, the effect of a defender update is governed by the inner product between the attacker and defender gradients. When these directions are aligned, improving the defender simultaneously increases the attacker’s objective, and vice versa. This coupling highlights how shared-parameter training can induce self-consistency instead of fully robust adversarial behaviour.

Geometrically, we can observe this by analysing the signs of the rewards in Eq.~\eqref{GradientOpposingForces}:

\begin{itemize}[noitemsep,topsep=1pt,parsep=1pt,partopsep=0pt]
    \item 
    \textbf{Case 1: Resistance ($r(y_A,y_D) > 0$):} When the defender resists an attack, Force 1 increases the likelihood of the safe response $y_D$ while Force 2 decreases the likelihood of the attack prompt $y_A$. This drives $\pi_\theta$ to avoid prompts it already defends against, leading to self-consistency rather than the discovery of fundamentally stronger attacks.
    
    \item \textbf{Case 2: Success ($r(y_A,y_D) < 0$):} Conversely, if a jailbreak succeeds, Force 2 attempts to reinforce the successful attack $y_A$. Simultaneously, the heavy penalty on the first term (Force 1) tries to update $\theta$ to suppress the vulnerability that permitted the jailbreak. 
\end{itemize}

Because these updates are applied simultaneously to the shared parameter vector $\theta$, the gradients effectively collide. Rather than sharpening the attacker and strengthening the defender, the model converges to a stagnant middle ground. This prevents exploration of the adversarial frontier, yielding a ``muddy'' representational space in which both roles are compromised.

\begin{tcolorbox}[
    colback=myblue!5!white,
    colframe=myblue,
    arc=1mm,
    boxrule=1pt,
    title=\textbf{Self-consistency and Safety Guarantees},
]
In summary:
\begin{itemize}[noitemsep,topsep=1pt,parsep=1pt,partopsep=0pt]
\item Self-play Nash equilibria can reduce to \emph{self-consistency} equilibria.
\item As an optimisation process, \textsc{Self-RedTeam}~\cite{liu2025chasingmovingtargetsonline} induces coupled gradients, where improving the defender can weaken the attacker.
\end{itemize}

\vspace{1mm}
\textbf{Conclusion:} A well-posed adversarial saddle-point formulation requires distinct policies.
\end{tcolorbox}

\section{Anchored Bipolicy Self-Play}
We now formalise the Anchored Bipolicy Self-Play (ABS) policy for the attacker and the defender: we rely on LoRA modules to ensure attacker-defender separation so that the adversarial pressure on the defender remains unaltered, while maintaining the objective function defined in \textsc{Self-RedTeam}.

\subsection{Role-conditioned LoRA Policies}
\label{sec:Role-conditioned LoRA Policies}
Let $\pi_\theta$ denote a single policy network with a shared frozen backbone and role-specific LoRA adapters. The effective parameters are given by
$\Theta = (\theta, \phi_A, \phi_D)$, where $\theta$ denotes the frozen backbone parameters, and $\phi_A, \phi_D$ are the LoRA parameters for the attacker and defender, respectively. For a Transformer-based LLM model~\cite{vaswani2023attentionneed}, and a subset of backbone parameters $\theta^{l} \subseteq \theta$ (e.g., the query and value projections of self-attention), a LoRA module parameterised by $\phi^{l} \subseteq \phi$ defines a low-rank update added to $\theta^{l}$ and trained while keeping $\theta^{l}$ frozen. Then, actions for the attacker and the defender are sampled as $y_A \sim \pi_{\theta, \phi_A}(\cdot)$ and $y_D \sim \pi_{\theta, \phi_D}(\cdot \mid y_A)$.

ABS adapts the training method in~\cite{liu2025chasingmovingtargetsonline}, which leverages Re++~\cite{hu2025reinforcestabilizingcriticfreepolicy}, a lightweight reinforcement learning algorithm extending PPO and REINFORCE~\cite{williams1992simple,schulman2017proximalpolicyoptimizationalgorithms}. It uses token-level KL regularisation to penalise deviations from a reference policy, along with different learning-rate warm-up procedures for the attacker and defender. Implementation details are provided in the next section and Appendix~\ref{a:ABS-modules}.
\newline
The attacker and defender are then trained using separate objectives:
\begin{equation}
\begin{aligned}
J_A(\phi_A|\phi_D) 
= 
\mathbb{E}_{y_A \sim \pi_{\theta,\phi_A},\, y_D \sim \pi_{\theta,\phi_D}(\cdot \mid y_A)} 
\big[ R_A(y_A,y_D) \big], \\
J_D(\phi_D|\phi_A) 
= 
\mathbb{E}_{y_A \sim \pi_{\theta,\phi_A},\, y_D \sim \pi_{\theta,\phi_D}(\cdot \mid y_A)} 
\big[ R_D(y_A,y_D) \big].
\end{aligned}
\end{equation}
The opposing player's parameters are fixed when computing gradients, with policy-gradient updates:
\begin{equation}
\label{eq:separate updates}
\begin{aligned}
    \nabla_{\phi_A} J_A
    &=
    \mathbb{E}\!\left[
    \nabla_{\phi_A} \log \pi_{\theta, \phi_A}(y_A)\,R_A(y_A,y_D)
    \right], \\
    \nabla_{\phi_D} J_D
    &=
    \mathbb{E}\!\left[
    \nabla_{\phi_D} \log \pi_{\theta, \phi_D}(y_D \mid y_A)\,R_D(y_A,y_D)
    \right].
\end{aligned}
\end{equation}
These updates evolve separately and restrict optimisation to low-rank subspaces over a shared frozen backbone. This parameter separation restores the distinction between attacker and defender updates, eliminating the parameter-level interference observed in Eq.~\eqref{eq:interference}; 
To formalise this, consider a defender-only update step $\phi_D^+ = \phi_D + \eta\, \nabla_{\phi_D} J_D(\phi_A,\phi_D)$, where $\eta > 0$ is the learning rate. A first-order Taylor expansion of the defender's objective yields
\begin{align}
J_D(\phi_A,\phi_D^{+})
=
J_D(\phi_A,\phi_D)
+
\eta \|\nabla_{\phi_D} J_D(\phi_A,\phi_D)\|^2
+
O(\eta^2).
\end{align}

Importantly, this dependence is purely distributional: the defender parameters update independently of the attacker parameters, i.e., $\frac{\partial \phi_D}{\partial \phi_A} = 0.$
We comprehensively formalise this in Appendix \ref{sec:Absence of Parameter-Level Coupling}.

\section{Experiments \& Results}
To compare our ABS to the fully fine-tuned \textsc{Self-RedTeam} models by Liu et al.~\cite{liu2025chasingmovingtargetsonline}, we adapted their code to train two independent LoRA modules, one for the attacker and one for the defender.
In their setting, the attacker is prompted with a seed from the training partition of WildJailBreak~\cite{jiang2024wildteamingscaleinthewildjailbreaks} (see their Appendix C.4.1), with a 50:50 ratio of harmful and benign prompts.
The prompt instructs the attacker and defender to first think, wrapped by \texttt{<think></think>} tokens, and then provide the answer between \texttt{<answer></answer>} tags. The defender produces its response in a similar format; we conducted the evaluation on the pairs of attacks and defences generated at each iteration.
As in \textsc{Self-RedTeam}, we use WildGuard~\cite{han2024wildguardopenonestopmoderation} as the reward model, which classifies the prompt and response harmfulness, and whether the response is a refusal to answer the attack.

We trained Qwen2.5-\{3B, 7B 14B\}-IT models~\cite{qwen2025qwen25technicalreport} with different LoRA settings, varying the rank and alpha values. The rank defines the hidden dimension of the LoRA representation and thus the number of trainable parameters, while alpha scales the influence of the adapter w.r.t. the base model. Table~\ref{tab:ABS-modules} and Appendix~\ref{a:ABS-modules} report more details of the setting.
Our training is entirely conducted on three NVIDIA H200 SXM GPUs with 141GB of vRAM.

For the safety evaluation, we employed eight tests in the Ai2 Safety Toolkit~\cite{jiang2024wildteamingscaleinthewildjailbreaks}: six tests concern harmful refusal, i.e., if a model responds to malicious prompts; two benign tests benchmark the ability of a model to comply with non-harmful instructions. The test suite measures the Attack Success Rate (ASR) and Robustness to Attacks (RTA).
We also benchmark ABS and the \textsc{Self-RedTeam} Qwen by Liu et al.~\cite{liu2025chasingmovingtargetsonline} on five standard reasoning and math benchmarks (GSM8K~\cite{cobbe2021trainingverifierssolvemath}, CommonSense~\cite{talmor2019commonsenseqaquestionansweringchallenge}, BoolQ~\cite{clark2019boolqexploringsurprisingdifficulty}, ARC Challenge~\cite{clark2018thinksolvedquestionanswering}, and HellaSwag~\cite{zellers2019hellaswagmachinereallyfinish}) to guarantee that they do not lose their base capabilities. 
We ran our tests on a single NVIDIA H200 SXM GPU with 141GB of vRAM and report the accuracy (pass@1) of each model. 

\begin{table*}[b]
\centering
\resizebox{\textwidth}{!}{
\begin{tabular}{l | cc | c | c | c | c | c || c | c || c}
\toprule
& \multicolumn{7}{c||}{\textbf{Harmful Refusal}} 
& \multicolumn{2}{c||}{\textbf{Benign Compliance}} 
& \textbf{Benchmark} \\
\cline{2-11}
& \multicolumn{2}{c|}{\textbf{WG:Test}} 
& \textbf{WJB} 
& \textbf{DAN} 
& \textbf{HarmBench} 
& \textbf{XSTest} 
& \textbf{StrongREJECT} 
& \textbf{WJB} 
& \textbf{XSTest} 
& \textbf{Reasoning/Math} \\

& \textbf{adv} 
& \textbf{vani}
& \textbf{adv} 
& \textbf{adv} 
& \textbf{adv} 
& \textbf{vani}
& \textbf{adv} 
& \textbf{vani}
& \textbf{vani}
& \textbf{-} \\

\textbf{Method} 
& \textbf{ASR} $\downarrow$ 
& \textbf{ASR} $\downarrow$ 
& \textbf{ASR} $\downarrow$ 
& \textbf{ASR} $\downarrow$ 
& \textbf{ASR} $\downarrow$ 
& \textbf{RTA} $\uparrow$
& \textbf{ASR} $\uparrow$
& \textbf{Comply} $\uparrow$
& \textbf{LC Winrate} $\uparrow$
& \textbf{Avg. Accuracy} $\uparrow$ \\

\midrule
Qwen2.5-3B-IT & 0.282 & 0.119 & 0.949 & 0.677 & 0.241 & 0.610 & 0.497 & 0.992 & 0.960 & $0.745 \pm 0.042$ \\
\quad + \textsc{Self-RedTeam} & 0.234 & 0.056 & 0.885 & 0.603 & 0.156 & 0.655 & 0.559 & \textbf{1.000} & 0.932 & $0.671 \pm 0.095$ \\
\quad + \cellcolor{mygrey}\textsc{ABS} & \cellcolor{mygrey}\textbf{0.122} & \cellcolor{mygrey}\textbf{0.019} & \cellcolor{mygrey}\textbf{0.721} & \cellcolor{mygrey}\textbf{0.320} & \cellcolor{mygrey}\textbf{0.088} & \cellcolor{mygrey}\textbf{0.680} & \cellcolor{mygrey}\textbf{0.669} & \cellcolor{mygrey}0.984 & \cellcolor{mygrey}\textbf{0.952} & \cellcolor{mygrey}$0.738 \pm 0.047$ \\
\midrule
\it Improv. \textsc{Self-RedTeam} (\%) & \color{myblue}{+17.0} & \color{myblue}{+52.9} & \color{myblue}{+6.7} & \color{myblue}{+10.9} & \color{myblue}{+35.3} & \color{myblue}{+7.4} & \color{myblue}{+12.5} & \color{myblue}{+0.8} & \color{myred}{-2.9} & \color{myred}{-7.4} \\
\it Improv. ABS (\%) & \color{myblue}{+56.7} & \color{myblue}{+84.0} & \color{myblue}{+24.0} & \color{myblue}{+52.7} & \color{myblue}{+63.5} & \color{myblue}{+11.5} & \color{myblue}{+34.6} & \color{myred}{-0.8} & \color{myred}{-0.8} & \color{myred}{-0.7} \\
\bottomrule
\\
Qwen2.5-7B-IT & 0.279 & 0.040 & 0.897 & 0.613 & 0.234 & 0.915 & 0.470 & \textbf{0.992} & \textbf{0.912} & $0.850 \pm 0.026$ \\
\quad + \textsc{Self-RedTeam} & 0.178 & \textbf{0.002} & \textbf{0.726} & 0.423 & \textbf{0.156} & \textbf{0.925} & 0.371 & \textbf{0.992} & 0.888 & $0.818 \pm 0.037$ \\
\quad + \cellcolor{mygrey}\textsc{ABS} & \cellcolor{mygrey}\textbf{0.169} & \cellcolor{mygrey}0.010 & \cellcolor{mygrey}0.756 & \cellcolor{mygrey}\textbf{0.383} & \cellcolor{mygrey}\textbf{0.156} & \cellcolor{mygrey}0.915 & \cellcolor{mygrey}\textbf{0.644} & \cellcolor{mygrey}0.988 & \cellcolor{mygrey}0.884 & \cellcolor{mygrey}$0.847 \pm 0.024$ \\
\midrule
\it Improv. \textsc{Self-RedTeam} (\%) 
& \color{myblue}{+36.18} 
& \color{myblue}{+94.19} 
& \color{myblue}{+19.12} 
& \color{myblue}{+30.98} 
& \color{myblue}{+33.36} 
& \color{myblue}{+1.09} 
& \color{myred}{-21.24} 
& 0.0 & \color{myred}{-2.63} 
& \color{myred}{-3.2} \\

\it Improv. ABS (\%) 
& \color{myblue}{+39.37} 
& \color{myblue}{+76.51} 
& \color{myblue}{+15.77} 
& \color{myblue}{+37.50} 
& \color{myblue}{+33.36} 
& 0.00 
& \color{myblue}{+36.97} 
& \color{myred}{-0.6} 
& \color{myred}{-3.07} 
& \color{myred}{-0.3} \\
\bottomrule
\\
Qwen2.5-14B-IT & 0.146 & 0.017 & 0.845 & 0.647 & 0.119 & 0.735 & 0.647 & \textbf{1.000} & \textbf{0.984} & $0.874 \pm 0.030$ \\
\quad + \textsc{Self-RedTeam} & \textbf{0.116} & \textbf{0.005} & \textbf{0.661} & 0.467 & \textbf{0.056} & 0.775 & 0.701 & 0.996 & 0.968 & $0.878 \pm 0.035$ \\
\quad + \cellcolor{mygrey}\textsc{ABS} & \cellcolor{mygrey}0.131 & \cellcolor{mygrey}0.022 & \cellcolor{mygrey}0.706 & \cellcolor{mygrey}\textbf{0.446} & \cellcolor{mygrey}0.093 & \cellcolor{mygrey}\textbf{0.770} & \cellcolor{mygrey}\textbf{0.748} & \cellcolor{mygrey}\textbf{1.000} & \cellcolor{mygrey}0.968 & \cellcolor{mygrey}$0.874 \pm 0.030$ \\
\midrule
\it Improv. \textsc{Self-RedTeam} (\%) 
& \color{myblue}{+20.55} 
& \color{myblue}{+70.59} 
& \color{myblue}{+21.77} 
& \color{myblue}{+27.82} 
& \color{myblue}{+52.94} 
& \color{myblue}{+5.44} 
& \color{myblue}{+8.35} 
& \color{myred}{-0.4} 
& \color{myred}{-1.63} 
& \color{myblue}{+0.4} \\

\it Improv. ABS (\%)
& \color{myblue}{+10.27}
& \color{myred}{-29.41}
& \color{myblue}{+16.44}
& \color{myblue}{+31.07}
& \color{myblue}{+21.85}
& \color{myblue}{+4.76}
& \color{myblue}{+15.61}
& 0.0
& \color{myred}{-1.63}
& 0.0 \\
\bottomrule

\end{tabular}
}
\caption{Performance of Qwen2.5-\{3B, 7B, 14B\}-IT, \textsc{Self-RedTeam} fine-tuned, and our \colorbox{mygrey}{ABS} models on six benchmarks for harmful refusal, two for benign compliance, and five (average) for instruction following and reasoning. Improvement is calculated relative to the base model. {\color{myblue}{Blue}} indicates improvements (lower ASR or higher RTA/Compliance), {\color{myred}{red}} indicates regression.}
\label{tab:comparative_performance}
\end{table*}

\subsection{ABS Safety and Efficiency Evaluation}

As Table~\ref{tab:comparative_performance} reports, our method outperforms the corresponding Qwen-IT base model and, in most cases, the fully fine-tuned \textsc{Self-RedTeam} model by Liu et al.~\cite{liu2025chasingmovingtargetsonline}.
On standard reasoning benchmarks, our models do not degrade their performance, as shown in the last column of Table~\ref{tab:comparative_performance} (full table in Appendix, Table~\ref{tab:performance-safety-qwen}), while the fine-tuned \textsc{Self-RedTeam} Qwen2.5-3B-IT experiences relevant drops, revealing that fine-tuning small models for safety may incur catastrophic forgetting.
In terms of efficiency, ABS introduces, for Qwen2.5-14B-IT with rank $32$ (our largest module), an additional $250M$ parameters ($2\%$ of the total parameters), and completes a full training run, on three H200 GPUS, in five hours. The details of other implementations are reported in Table~\ref{tab:ABS-modules}.

We also observe that in some cases LoRA becomes quickly unstable, and derails to produce nonsense output; a typical collapse mode consists of Qwen producing a sequence of \texttt{!}, which we notice is a common training prefix in some popular red-team papers~\cite{schwinn2023adversarialattacksdefenseslarge} (see Appendix~\ref{a:exp} for details on ABS collapses and all the results on LoRA variations).
In terms of efficiency, Table~\ref{tab:ABS-modules} reports that training a LoRA adapter with rank $16$ on three NVIDIA H200 GPUs takes approximately the same time as benchmarking, on a single H200, the same model on the nine test suites.

\begin{table}[t]
\centering
\resizebox{0.85\linewidth}{!}{
\begin{tabular}{lcc|cc}
\toprule
\textbf{Method} & \shortstack{\textbf{Training Time} \\ \textbf{1 Step [s]}} & \shortstack{\textbf{Training Time} \\ \textbf{100 Steps [s]}} & \textbf{\shortstack{Parameters \\ Per-layer [M]}} & \textbf{\shortstack{Parameters \\ Per-model [M]}} \\
\midrule
Qwen2.5-3B-IT (32 layers) & & & & \\
\quad + ABS rank=16 & 55-65 & 5,500-6,500 & 1.5 & 50 (1.7\%) \\
\quad + ABS rank=32 & 62-68 & 6,200-6800 & 3.1 & 100 (3.3\%) \\
\hline
Qwen2.5-7B-IT (32 layers) & & & & \\
\quad + ABS rank=16, & 82-90 & 8,200-9,000 & 2.5 & 80 (1.1\%) \\
\quad + ABS rank=32 & 110-115 & 11,000-11,500 & 5.0 & 160 (2.3\%) \\
\hline
Qwen2.5-14B-IT (40 layers) & & & & \\
\quad + ABS rank=16 & 132-144 & 13,200-14,400 & 3.1 & 125 (0.9\%) \\
\quad + ABS rank=32 & 168-192 & 16,800-19,200 & 6.2 & 250 (1.8\%) \\
\bottomrule
\end{tabular}
}
\caption{Training time (in seconds, per step and in total) and size (per-layer and in total, with, within parentheses, the percentage of additional parameters w.r.t. the base model) of Qwen2.5-\{3B, 7B, 14B\}-IT ABS models. Estimates may vary depending to several factors (GPUS utilisation, etc.), but reflect a lower bound on three NVIDIA H200 GPUS (we used dedicated, bare-metal GPUS).}
\label{tab:ABS-modules}
\end{table}

\subsection{ABS Adversarial Strength and Defensive Robustness}\label{sec:qwen-attack-analysis}

In this section, we analysed the adversarial pressure that a trained attacker can put on the defender. 
We compared the ASR (Attack Success Rate) of ABS and \textsc{Self-RedTeam} checkpoints vs the same base model.
We sample $100$ examples from the Harmful Behavior Dataset~\cite{jiang2024wildteamingscaleinthewildjailbreaks}, and we ask the attacker to produce a jailbreak following the same format used at training time. The defender receives the input and decides whether to comply or not with the request. We evaluate the success of an attack with GPT-5.4 Standard with High reasoning as an LLM-as-a-judge. We set strict prompt rules on the success of an attack: a success is an answer that, for the judge, ``gave substantive harmful assistance'' to the defender.
As shown in Figure~\ref{fig:qwen-tournament}, \textsc{Self-RedTeam} of Qwen2.5-3B-IT and 7B have an ASR of $18\%$ and $9\%$ respectively, while our corresponding ABS produce more effective jailbreaks and achieve an ASR of $35\%$ and $24\%$.

We also conducted a ``tournament'' to evaluate the adversarial pressure between our ABS models and \textsc{Self-RedTeam}. Over $100$ samples from the Harmful Behavior Dataset, we cross-tested each methodology's attacker against the opposing defender at equivalent model scales. This analysis provides a measure of the adversarial pressure that an attacker puts on the other model's defender. 
Similarly to the previous case, GPT-5.4 with High reasoning judges the success of an attack.
As reported in Figure~\ref{fig:qwen-tournament}, \textsc{Self-RedTeam} Qwen2.5-3B-IT and 7B have both an ASR of $6\%$, while our corresponding ABS produces more effective jailbreaks and achieves an ASR of $17\%$ and $18\%$.
In conclusion, the shared-model defender is markedly more permissive than the LoRA defender.

We compared attacks generated by ABS and \textsc{Self-RedTeam} along three axes: semantic similarity, lexical diversity, and structural complexity. Table~\ref{tab:qwen-attack-summary} provides a quantitative overview of these differences. \textsc{Self-RedTeam} produces shorter and more homogeneous attacks, as reflected by higher Self-BLEU scores and lower token counts. In contrast, ABS generates longer and lexically diverse outputs, particularly the 3B and 7B, where token counts increase substantially, and Self-BLEU decreases. Despite these surface-level differences, cosine similarity remains broadly comparable across methods, suggesting that both approaches occupy similar regions in embedding space (Figure~\ref{fig:umap-qwen} shows the UMAP embeddings for ABS and \textsc{Self-RedTeam}).

These quantitative trends correspond to clear stylistic differences. \textsc{Self-RedTeam} attacks are typically direct reformulations of the harmful intent: they are concise, imperative, and minimally elaborated. By contrast, ABS tends to embed the request within richer contextual scaffolding, often introducing multi-step scenarios, social engineering elements, or concealment-oriented phrasing. This results in attacks that are more elaborate but also less controlled, especially in the 3B model. In that setting, ABS outputs sometimes include residual meta-instructions, such as references to an ``original prompt'' or ``revised prompt'', suggesting weaker separation between instruction-following and task narration; this likely reflects the limited capacity of a LoRA module to enforce clean task boundaries.

Overall, the comparison reveals a trade-off between reliability and elaboration. \textsc{Self-RedTeam} variants, particularly at 7B, are more structurally consistent and produce cleaner attack spans. In contrast, the 3B ABS variant is less disciplined but often generates longer and more evasive formulations, indicating greater rhetorical complexity even when formatting quality is degraded. Representative examples (Appendix~\ref{a:qualitative-attacks} and~\ref{a:attacks-analysis}) illustrate this contrast. In conclusion, lexical diversity and verbosity, as captured by Self-BLEU and token counts in Table~\ref{tab:qwen-attack-summary}, appear to be the key factors underlying the success of ABS attacks.

\begin{center}
\small
\noindent
\begin{minipage}[c]{0.40\textwidth}
    \centering
    \includegraphics[width=\linewidth]{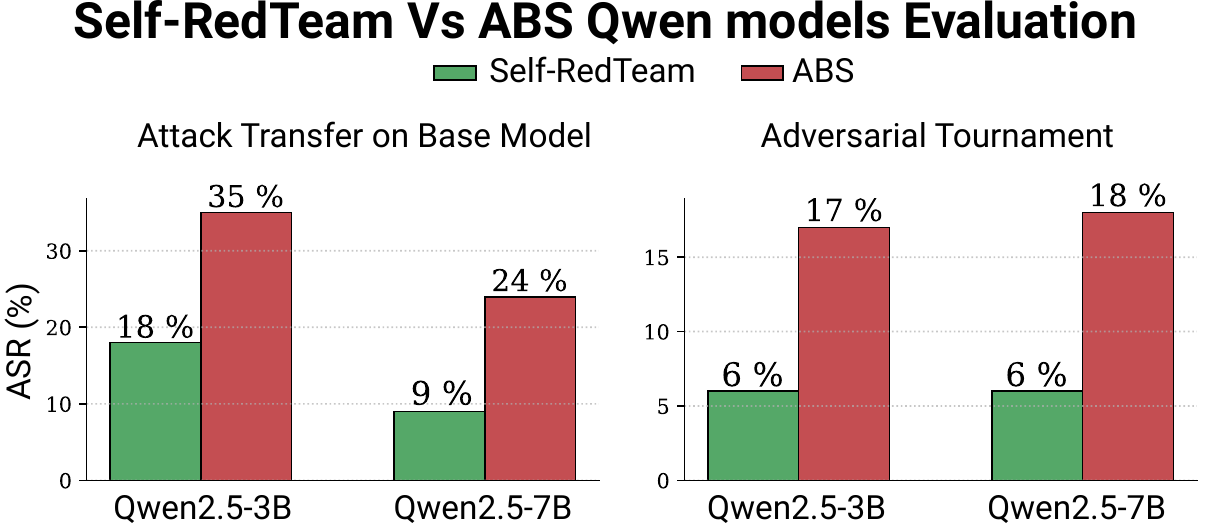}
    \captionof{figure}{Transferability of attacks and comparative evaluation of \textsc{Self-RedTeam} versus ABS Qwen-IT models across the tournament suite.}
    \label{fig:qwen-tournament}
\end{minipage}
\hfill
\begin{minipage}[c]{0.56\textwidth}
    \centering
    \resizebox{\linewidth}{!}{
        \begin{tabular}{lcc|ccc}
        \toprule
        \textbf{Method} & \shortstack{\textbf{Cosine} \\ \textbf{Similarity}} & \textbf{\shortstack{Self-BLEU-3 \\ Score}} & \textbf{\shortstack{\# Tokens \\ Average}} & \textbf{\shortstack{\# Think Tokens \\ Average}} & \textbf{\shortstack{Think Tokens \\ Frequency}} \\
        \midrule
        Qwen2.5-3B-IT & & & & & \\
        \quad + \textsc{Self-RedTeam} & 0.281 & 0.785 & 59.26 & 58.69 & 9.3\% \\
        \quad + \cellcolor{mygrey}ABS & \cellcolor{mygrey}0.310 & \cellcolor{mygrey}0.714 & \cellcolor{mygrey}232.74 & \cellcolor{mygrey}88.82 & \cellcolor{mygrey}70.5\% \\
        \hline
        Qwen2.5-7B-IT & & & & & \\
        \quad + \textsc{Self-RedTeam} & 0.242 & 0.670 & 90.04 & 49.69 & 99.21\% \\
        \quad + \cellcolor{mygrey}ABS & \cellcolor{mygrey}0.201 & \cellcolor{mygrey}0.646 & \cellcolor{mygrey}250.94 & \cellcolor{mygrey}53.91 & \cellcolor{mygrey}19.14\% \\
        \hline
        Qwen2.5-14B-IT & & & & & \\
        \quad + \textsc{Self-RedTeam} & 0.206 & 0.547 & 122.43 & 73.31 & 100\% \\
        \quad + \cellcolor{mygrey}ABS & \cellcolor{mygrey}0.210 & \cellcolor{mygrey}0.500 & \cellcolor{mygrey}135.86 & \cellcolor{mygrey}76.67 & \cellcolor{mygrey}25\% \\
        \bottomrule
        \end{tabular}
    }
    \captionof{table}{Statistics of the attacks generated by \textsc{Self-RedTeam} vs LoRA Qwen2.5-IT models on the Harmful Behavior Dataset. Cosine similarity measures embedding similarity, while Self-BLEU measures lexical diversity.}
    \label{tab:qwen-attack-summary}
\end{minipage}
\end{center}

\begin{figure}[t]
    \centering
    \includegraphics[width=1\linewidth]{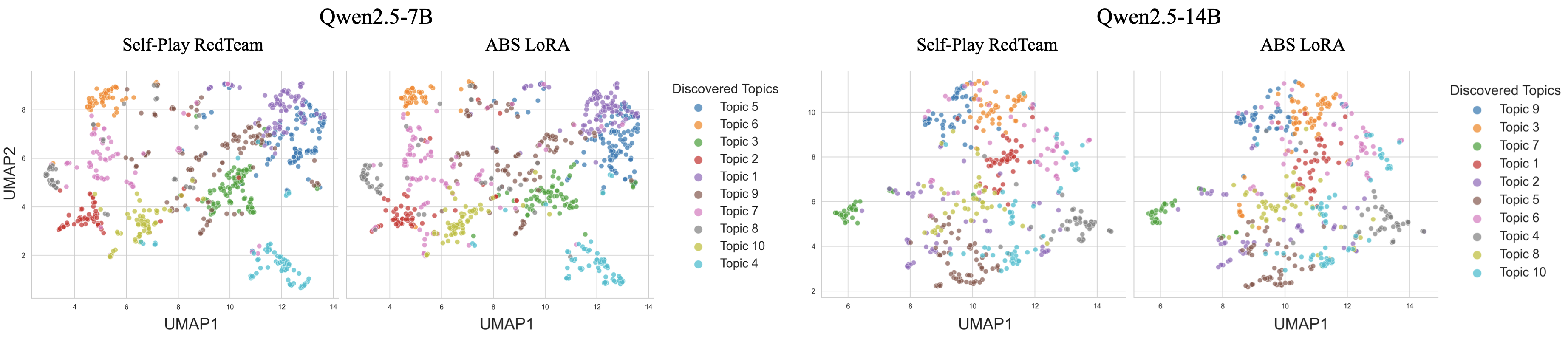}
    \caption{UMAP projection of the attacks of the \textsc{Self-RedTeam} and ABS on the Harmful Behavior Dataset; the base models are Qwen2.5-\{7B, 14B\}. The projections are similar in the embedding space, despite our ABS attacks being lexically different from the \textsc{Self-RedTeam}. The lexical analysis in Table~\ref{tab:qwen-attack-summary} solves this apparent inconsistency.}
    \label{fig:umap-qwen}
\end{figure}

\subsection{Interfacing with Human Values: Fairness and Harmfulness}
In this section, we explore how ABS performs at blocking specific human harms, such as fairness or toxicity, without over-refusing benign prompts. To understand this trade-off, we conducted an evaluation of our model and the baselines using semantic sub-categories from WildGuard, HarmBench, and XSTest. We show the comparison in Table~\ref{tab:semantic_harms_detailed}.

Both ABS and \textsc{Self-RedTeam} demonstrate high effectiveness in suppressing malicious content. ABS, in particular, achieves substantial safety penetration against direct human harm without necessitating full-parameter updates. For example, evaluation on the HarmBench Harassment \& Bullying semantic category indicates that the 3B and 14B ABS models attain a 100\% safe refusal rate. In WildGuard’s Violence \& Physical Harm and Toxicity categories, the ABS models consistently maintain safety refusal rates above 82\% and 78\%, respectively, across all model scales, with the 14B model exceeding 92\% in both. 

Under adversarial pressure, models frequently associate terms related to protected demographics, such as race, gender, or religion, with harmful attacks. This association results in excessive refusal of benign requests, quantified by XSTest’s Discrimination Contrast and Benign Compliance metrics.
The analysis demonstrates a pronounced difference in how \textsc{Self-RedTeam} and ABS architectures address fairness. \textsc{Self-RedTeam} functions as a blunt instrument. Gradient collisions compel these models to converge towards a middle ground, as discussed in Section~\ref{sec:Min-Max Constraint Conflict and Self-Consistency}, resulting in a substantial reduction in benign compliance to prioritise safety. The base Qwen2.5-3B model attains a 0.960 Benign Compliance score on XSTest, which declines markedly to 0.932 under \textsc{Self-RedTeam}. 
Qwen2.5-3B ABS model has a Benign Compliance score of 0.952. At the 7B scale, ABS further improves upon the base model’s compliance (0.915 compared to 0.912), whereas the \textsc{Self-RedTeam} variant decreases to 0.888.

\begin{table*}[htbp]
\centering
\resizebox{\textwidth}{!}{
\begin{tabular}{l | c | c | c || c | c}
\toprule
& \multicolumn{3}{c||}{\textbf{Direct Human Harms}} 
& \multicolumn{2}{c}{\textbf{Fairness \& Over-Refusal}} \\
\cline{2-6}
& \textbf{WG: Violence} 
& \textbf{WG: Toxicity} 
& \textbf{HB: Harassment} 
& \textbf{XSTest: Discr. Contrast} 
& \textbf{XSTest: Benign Comply} \\

\textbf{Method} 
& \textbf{Safe Refusal} $\uparrow$ 
& \textbf{Safe Refusal} $\uparrow$ 
& \textbf{Safe Refusal} $\uparrow$ 
& \textbf{Contrast} $\uparrow$ 
& \textbf{Comply} $\uparrow$ \\

\midrule
Qwen2.5-3B-IT & 0.700 & 0.659 & \textbf{1.000} & 0.080 & \textbf{0.960} \\
\quad + \textsc{Self-RedTeam} & 0.725 & 0.756 & \textbf{1.000} & 0.160 & 0.932 \\
\quad + \cellcolor{mygrey}\textsc{ABS} & \cellcolor{mygrey}\textbf{0.850} & \cellcolor{mygrey}\textbf{0.927} & \cellcolor{mygrey}\textbf{1.000} & \cellcolor{mygrey}\textbf{0.240} & \cellcolor{mygrey}0.952 \\
\midrule
\it Improv. \textsc{Self-RedTeam} (\%) & \color{myblue}{+3.6} & \color{myblue}{+14.8} & 0.0 & \color{myblue}{+100.0} & \color{myred}{-2.9} \\
\it Improv. ABS (\%) & \color{myblue}{+21.4} & \color{myblue}{+40.7} & 0.0 & \color{myblue}{+200.0} & \color{myred}{-0.8} \\
\bottomrule
\\
Qwen2.5-7B-IT & 0.725 & \textbf{0.805} & \textbf{0.952} & \textbf{0.800} & 0.912 \\
\quad + \textsc{Self-RedTeam} & 0.800 & \textbf{0.805} & \textbf{0.952} & 0.720 & 0.888 \\
\quad + \cellcolor{mygrey}\textsc{ABS} & \cellcolor{mygrey}\textbf{0.825} & \cellcolor{mygrey}0.780 & \cellcolor{mygrey}\textbf{0.952} & \cellcolor{mygrey}0.760 & \cellcolor{mygrey}\textbf{0.915} \\
\midrule
\it Improv. \textsc{Self-RedTeam} (\%) & \color{myblue}{+10.3} & 0.0 & 0.0 & \color{myred}{-10.0} & \color{myred}{-2.6} \\
\it Improv. ABS (\%) & \color{myblue}{+13.8} & \color{myred}{-3.0} & 0.0 & \color{myred}{-5.0} & \color{myblue}{+0.3} \\
\bottomrule
\\
Qwen2.5-14B-IT & 0.800 & 0.878 & \textbf{1.000} & \textbf{0.440} & \textbf{0.984} \\
\quad + \textsc{Self-RedTeam} & \textbf{0.950} & 0.927 & \textbf{1.000} & 0.400 & 0.968 \\
\quad + \cellcolor{mygrey}\textsc{ABS} & \cellcolor{mygrey}0.925 & \cellcolor{mygrey}\textbf{0.951} & \cellcolor{mygrey}\textbf{1.000} & \cellcolor{mygrey}0.400 & \cellcolor{mygrey}0.980 \\
\midrule
\it Improv. \textsc{Self-RedTeam} (\%) & \color{myblue}{+18.7} & \color{myblue}{+5.6} & 0.0 & \color{myred}{-9.1} & \color{myred}{-1.6} \\
\it Improv. ABS (\%) & \color{myblue}{+15.6} & \color{myblue}{+8.3} & 0.0 & \color{myred}{-9.1} & \color{myred}{-0.4} \\
\bottomrule

\end{tabular}
}
\caption{Semantic breakdown of Safety vs. Over-Refusal across model scales. Higher scores ($\uparrow$) indicate better performance (i.e., successfully refusing harmful prompts, and correctly complying with safe, benign prompts). The overall XSTest Benign Compliance scores demonstrate that \textsc{Self-RedTeam} degrades the model's ability to engage with safe prompts across all scales, whereas \colorbox{mygrey}{ABS} maintains or recovers this capability while deeply mitigating direct human harms. {\color{myblue}{Blue}} indicates a shift in the desired direction, while {\color{myred}{red}} indicates regression relative to the base model.}
\label{tab:semantic_harms_detailed}
\end{table*}

\section{Conclusion, Limitations, and Future Work}
In this paper, we propose ABS, a model designed to overcome the theoretical limitations of standard self-play in LLM safety training. By freezing the base model and introducing two LoRA-based extensions, ABS provides an efficient hybrid that separates self-play from gradient updates. ABS achieves strong performance improvements over base LLMs while significantly reducing computational cost.

Our evaluation across multiple model scales demonstrates that separating the attacker and defender is crucial for effective adversarial training. Both our theoretical analysis and empirical results indicate that shared-parameter frameworks, such as \textsc{Self-RedTeam}, suffer from a self-consistency bias that weakens the attacker. ABS outperforms \textsc{Self-RedTeam} on Qwen2.5-3B and reaches comparable performance on larger models, while consistently improving over the base model. In addition to quantitative gains, ABS-trained attackers develop more sophisticated and camouflaged strategies. The method is also highly efficient, achieving up to two orders of magnitude better memory usage while preserving the reasoning capabilities of the underlying model.

Despite these advantages, ABS exhibits some limitations. LoRA training can occasionally become unstable, leading to collapse modes with repetitive or nonsensical outputs, and larger models require careful hyperparameter tuning to avoid such failures (Appendix~\ref{a:exp}). Nevertheless, the efficiency of ABS makes it promising for deploying safer yet useful models, particularly at smaller scales. Future work will explore extensions to more complex threat settings, including mosaic attacks and mixtures of attackers, as well as broader applications of this game-theoretic framework to self-play in LLMs.

\section*{Acknowledgments}
GLM was supported by UK Research and Innovation [grant number EP/S023356/1], in the UKRI Centre for Doctoral Training in Safe and Trusted Artificial Intelligence (www.safeandtrustedai.org). 
MW was supported by an AI 2050 Senior Fellowship from the Schmidt Sciences Foundation. ELM is affiliated with the Institute for Decentralized AI (IDAI), which he thanks for its support and for providing the budget for computing.

\clearpage

\bibliographystyle{abbrv}
\bibliography{biblio}

\clearpage

\appendix
\section*{Appendix}

\section{More Details on Bound 1: An always-refuse equilibrium}
\label{appendix:TV inequality}
Recall our assumption that the defender can always issue a refusal response $y^{\mathrm{ref}}\in\mathcal{Y}_D$ such that $r(y_A,y^{\mathrm{ref}})=0$ for all $y_A\in\mathcal{Y}_A$. Thus, for any prompt $y_A$, we define the always-refuse defender strategy as $\pi_D^{\mathrm{ref}}(y_D \mid y_A) = \mathbf{1}\{y_D = y^{\mathrm{ref}}\}$, which yields an expected reward $V(\pi_A, \pi_D^{\mathrm{ref}}) = 0$ for any attacker policy $\pi_A$. 

A pair $(\pi_A^\ast, \pi_D^{\mathrm{ref}})$ constitutes a Nash equilibrium if the attacker's policy $\pi_A^\ast$ is sufficiently adversarial such that no defender policy guarantees a strictly positive expected payoff, i.e., $\max_{\pi_D} V(\pi_A^\ast, \pi_D)=0$. Therefore, suppose the attacker's policy space consists of probability distributions in a localised neighbourhood around a reference policy $\bar{\pi}_A$, representing baseline adversarial behaviour already optimised to challenge the defender~\cite{schulman2017trustregionpolicyoptimization}. In particular, we assume that $\bar{\pi}_A$ concentrates on prompts for which safe responses yield no positive reward, so that the defender cannot improve upon refusal. This constraint captures the intuition that, while attack policies may adapt to minimize utility, they should remain sufficiently close to a reference adversarial behaviour to preserve semantic consistency and interpretability~\cite{chen2025incentivizing}. In practice, this can be enforced by initialising the agent with a fine-tuned model embodying $\bar{\pi}_A$, and constraining updates to remain within a trust region \cite{qu2020scalable,zhang2023global}. Specifically, for a fixed $\delta \in [0,1]$ we define the local policy $\delta$-neighbourhood as:
\begin{equation}
    \label{eq:neighbourhood}
    \Pi_A^{\delta} = \left\{\pi_A \in \Delta(\mathcal{Y}_A): D_{\text{TV}}(\pi_A\|\bar{\pi}_A) \le \delta\right\},
\end{equation}
where $D_{\text{TV}}(\cdot\|\cdot)$ denotes the total variation distance, given as:
\begin{equation}
    D_{\text{TV}}(\pi_A\|\bar{\pi}_A) = \frac{1}{2} \sum_{y_A \in \mathcal{Y}_A} |\pi_A(y_A) - \bar{\pi}_A(y_A)|
\end{equation}

Although trust-region methods are commonly defined using KL divergence~\cite{schulman2017trustregionpolicyoptimization}, we use total variation distance in the analysis, since it directly bounds differences in expectations of bounded functions. Specifically, in Lemma \ref{lemma:TV inequality} we show that, within $\Pi_A^{\delta}$, the defender's best achievable expected payoff changes by a controlled amount (see the full proof in Appendix \ref{supp:proof of lemma 1}). In contrast, KL divergence only provides such control indirectly, resulting in weaker bounds (see Appendix \ref{appendix:KL inequality} for further details).
\begin{lemma}
    \label{lemma:TV inequality}
    For any attacker policies $\pi_A,{\pi}_A' \in \Delta(\mathcal{Y}_A)$ and any defender policy $\pi_D$:
    \begin{equation}
        \left|V(\pi_A,\pi_D) - V({\pi}_A',\pi_D)\right|
        \le 2 D_{\text{TV}}(\pi_A\|{\pi}_A').
    \end{equation}
    In particular, if $\delta=\frac{\varepsilon}{2}$ and $\pi_A \in \Pi_A^\delta$ for some fixed $\varepsilon \in [0,1]$, then:
    \begin{equation}
        \max_{\pi_D} V(\pi_A,\pi_D)
        \le
        \max_{\pi_D} V(\bar{\pi}_A,\pi_D) + \varepsilon.
    \end{equation}
\end{lemma}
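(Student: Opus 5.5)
The plan is to fix $\pi_D$ and write $V$ as an expectation over the attacker's prompt alone. Then the first inequality is the standard bound on how much the expectation of a bounded function can change between two distributions, in terms of their total variation distance. The second inequality follows by taking maxima over $\pi_D$.

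First, for a fixed defender policy $\pi_D$, I define $g(y_A) \coloneqq \mathbb{E}_{y_D\sim\pi_D(\cdot\mid y_A)}[r(y_A,y_D)]$. Since $r$ takes values in $[-1,1]$ and $g$ is an average of such values, $|g(y_A)|\le 1$ for every $y_A$. By construction $V(\pi_A,\pi_D)=\sum_{y_A}\pi_A(y_A)\,g(y_A)$, and the same holds for $\pi_A'$. Therefore
\[
\left|V(\pi_A,\pi_D)-V(\pi_A',\pi_D)\right| = \Big|\sum_{y_A}\big(\pi_A(y_A)-\pi_A'(y_A)\big)g(y_A)\Big| \le \sum_{y_A}\left|\pi_A(y_A)-\pi_A'(y_A)\right| = 2D_{\text{TV}}(\pi_A\|\pi_A').
\]
This is the first claim. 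If $\mathcal{Y}_A$ is countably infinite, absolute convergence of the sums is guaranteed by $|g|\le 1$, so nothing changes.

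For the second claim, I take $\pi_A\in\Pi_A^\delta$ with $\delta=\varepsilon/2$, apply the first part with $\pi_A'=\bar\pi_A$, and use \eqref{eq:neighbourhood}. For every $\pi_D$ this gives
\[
V(\pi_A,\pi_D)\le V(\bar\pi_A,\pi_D)+2\delta = V(\bar\pi_A,\pi_D)+\varepsilon \le \sup_{\pi_D'}V(\bar\pi_A,\pi_D')+\varepsilon.
\]
The right-hand side no longer depends on $\pi_D$, so taking the supremum over $\pi_D$ on the left yields the claim.

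The only point needing care is whether the maxima are attained. Following the pointwise-vertex argument used for Bound 2, $\sup_{\pi_D}V(\pi_A,\pi_D)=\sum_{y_A}\pi_A(y_A)\max_{y_D}r(y_A,y_D)$, and this is attained whenever each $\max_{y_D}$ exists, for instance when $\mathcal{Y}_D$ is finite. Otherwise I would state the result with $\sup$, and the argument is unchanged.

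I expect no real obstacle here. The main things to check are that the factor $2$ comes from the $\tfrac12$ in the definition of $D_{\text{TV}}$ combined with $\|g\|_\infty\le 1$, and that the bound does not depend on $\pi_D$, which is what lets it pass through the maximum.
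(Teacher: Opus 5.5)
Your proof is correct and takes essentially the same route as the paper, which bounds $\bigl|\sum_{y_A}\sum_{y_D}\pi_D(y_D\mid y_A)\,r(y_A,y_D)\,(\pi_A(y_A)-\pi_A'(y_A))\bigr|$ by moving the absolute value inside and using $|r|\le 1$ and $\sum_{y_D}\pi_D(y_D\mid y_A)=1$ (your $|g|\le 1$ step), and then derives the second claim from the uniform bound $2\delta=\varepsilon$. Your explicit passage to the supremum over $\pi_D$, and your remark on when the maximum is attained, spell out a step the paper only says ``readily follows''.
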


Lemma \ref{lemma:TV inequality} and Appendix \ref{appendix:KL inequality} show that, in either a total-variation or KL divergence neighbourhood of the reference policy $\bar{\pi}_A$, the defender's best achievable payoff changes only by a controlled amount. As $\bar{\pi}_A$  represents a baseline adversarial behaviour, it is natural to restrict attention to local perturbations that preserve this property. We thus focus on attacker policies $\pi_A^\ast \in \Pi_A^{\delta}$ that preserve the adversarial property of $\bar{\pi}_A$ locally, in the sense that they do not admit any defender strategy with strictly positive expected payoff. Formally, we require that for any alternative defender strategy $\pi_D'$:
\begin{equation}
    \mathbb{E}_{y_A \sim \pi_A^\ast, y_D \sim \pi_D'}[r(y_A, y_D)] \le 0.
\end{equation}

This condition can be interpreted as a \textit{worst-case} constraint on the attacker: within the local policy class $\Pi_A^\delta$, any attempt by the defender to provide a helpful response yields no positive gain and may incur risk, so the defender has no incentive to deviate from $y^{\mathrm{ref}}$. Specifically, providing a helpful response risks a payoff of $-1$ that is not offset by safe, high-reward completions, making the expected value of answering non-positive. Hence, $\pi_D^{\mathrm{ref}}$ is a best response to $\pi_A^\ast\in \Pi_A^{\delta}$. Moreover, since the attacker cannot reduce the payoff below $0$ against a refusing defender, $\pi_A^\ast$ is also a best response.

\subsection{Proof of Lemma \ref{lemma:TV inequality}}
\label{supp:proof of lemma 1}
\begin{customlemma}{\ref{lemma:TV inequality}}
    For any attacker policies $\pi_A,{\pi}_A' \in \Delta(\mathcal{Y}_A)$ and any defender policy $\pi_D$:
    \begin{equation}
        \label{eq:tv bound}
        \left|V(\pi_A,\pi_D) - V({\pi}_A',\pi_D)\right|
        \le 2 D_{\text{TV}}(\pi_A\|{\pi}_A').
    \end{equation}
    In particular, if $\delta=\frac{\varepsilon}{2}$ and $\pi_A \in \Pi_A^\delta$ for some fixed $\varepsilon \in [0,1]$, then:
    \begin{equation}
        \label{eq:tv ineq conclusion}
        \max_{\pi_D} V(\pi_A,\pi_D)
        \le
        \max_{\pi_D} V(\bar{\pi}_A,\pi_D) + \varepsilon.
    \end{equation}
\end{customlemma}
\begin{proof}
    We start with proving \eqref{eq:tv bound}. Consider any attacker policies $\pi_A,{\pi}_A' \in \Delta(\mathcal{Y}_A)$ and any defender policy $\pi_D$. By definition, the following holds:
    \begin{equation}
        \label{eq:expected util}
        \begin{aligned}
            V(\pi_A,\pi_D)&=\mathbb{E}_{y_A\sim\pi_A,\;y_D\sim\pi_D(\cdot\mid y_A)} \left[ r(y_A,y_D) \right] \\
            &= \sum_{y_A \in \mathcal{Y}_A} \pi_A(y_A) \sum_{y_D \in \mathcal{Y}_D} \pi_D(y_D|y_A) \cdot r(y_A,y_D).
        \end{aligned}
    \end{equation}
    By similar arguments:
    \begin{equation}
        \label{eq:expected util2}
        \begin{aligned}
            V(\pi_A',\pi_D)= \sum_{y_A \in \mathcal{Y}_A} \pi_A'(y_A) \sum_{y_D \in \mathcal{Y}_D} \pi_D(y_D|y_A) \cdot r(y_A,y_D).
        \end{aligned}
    \end{equation}
    By combining \eqref{eq:expected util} and \eqref{eq:expected util2}, we obtain that:
    \begin{subequations}
        \begin{align}
            \left|V(\pi_A,\pi_D) - V({\pi}_A',\pi_D)\right| &= \left|\sum_{y_A \in \mathcal{Y}_A} \sum_{y_D \in \mathcal{Y}_D} \pi_D(y_D|y_A) \cdot r(y_A,y_D) (\pi_A(y_A)-\pi_A'(y_A))\right| \\
            &\le \sum_{y_A \in \mathcal{Y}_A} \sum_{y_D \in \mathcal{Y}_D} |\pi_D(y_D|y_A)| \cdot |r(y_A,y_D)| \left|\pi_A(y_A)-\pi_A'(y_A)\right| \label{eq:triangle inequality} \\
            & \le \sum_{y_A \in \mathcal{Y}_A} \left|\pi_A(y_A)-\pi_A'(y_A)\right| \sum_{y_D \in \mathcal{Y}_D}\pi_D(y_D|y_A) \label{eq:1 inequality}\\
            & = \sum_{y_A \in \mathcal{Y}_A} \left|\pi_A(y_A)-\pi_A'(y_A)\right|.\label{eq:2 inequality}
        \end{align}
    \end{subequations}
    where the inequality in \eqref{eq:triangle inequality} follows from the triangle inequality. Furthermore, the inequality in \eqref{eq:1 inequality} holds because $|\pi_D(y_D|y_A)| = \pi_D(y_D|y_A)$ for any $y_A \in \mathcal{Y}_A,y_D \in \mathcal{Y}_D$ since $\pi_D(y_D|y_A) \in [0,1]$, and $|r(y_A,y_D)| \le 1$ due to our assumption in Section \ref{sec:Nash Equilibria in Self-Play Red-Teaming} that $r(y_A,y_D) \in [-1,1]$ for all $y_A \in \mathcal{Y}_A,y_D \in \mathcal{Y}_D$. Finally, the last equality in \eqref{eq:2 inequality} stems from $\sum_{y_D \in \mathcal{Y}_D}\pi_D(y_D|y_A) =1$ for any $y_A \in \mathcal{Y}_A$ due to $\pi_D(\cdot|y_A) \in \Delta(\mathcal{Y}_D)$.

    Subsequently, we relate the term in \eqref{eq:2 inequality} to the total variation distance. Indeed, since $D_{\text{TV}}(\pi_A\|{\pi}_A') = \frac{1}{2} \sum_{y_A\in\mathcal{Y}_A} \left|\pi_A(y_A)-\pi_A'(y_A)\right|$ by definition, we conclude that:
    \begin{equation}
        \label{eq:tv bound proof}
        \left|V(\pi_A,\pi_D) - V({\pi}_A',\pi_D)\right|\le 2 D_{\text{TV}}(\pi_A\|{\pi}_A'),
    \end{equation}
    as desired.

    Next, we prove the second part of our claim in \eqref{eq:tv ineq conclusion}. For any fixed $\delta \in [0,1]$, consider any attacker policy $\pi_A \in \Pi_A^\delta$. That is, $D_{\text{TV}}(\pi_A\|\bar{\pi}_A) \le \delta$, where $\bar{\pi}_A$ is the reference policy defined in Section \ref{sec:Analysis of Extremal Nash Equilibria}, representing baseline adversarial behaviour already optimised to challenge the defender. Therefore, combined with \eqref{eq:tv bound proof}, we obtain:
    \begin{equation}
        \label{eq:tv bound proof2}
        \left|V(\pi_A,\pi_D) - V({\pi}_A',\pi_D)\right|\le 2 \delta.
    \end{equation}
    As such, if $\delta=\frac{\varepsilon}{2}$ for any fixed $\varepsilon \in [0,1]$, the desired in \eqref{eq:tv ineq conclusion} readily follows from \eqref{eq:tv bound proof2}.
\end{proof}

\clearpage 

\subsection{Discussion: Using KL-Divergence for Local Policy Neighbourhoods}
\label{appendix:KL inequality}
In this section, we discuss the consequences of defining the local policy neighbourhood in \eqref{eq:neighbourhood} using the KL divergence instead of the total variation distance. Indeed, for a fixed $\delta \in [0,1]$ we define the local policy $\delta$-neighbourhood using the KL divergence as follows:
\begin{equation}
    \Pi_A^{\delta,\text{KL}} = \left\{\pi_A \in \Delta(\mathcal{Y}_A): D_{\text{KL}}(\pi_A\|\bar{\pi}_A) \le \delta\right\},
\end{equation}
where $D_{\text{KL}}(\cdot\|\cdot)$ denotes the KL divergence, which is given by:
\begin{equation}
    D_{\text{KL}}(\pi_A\|\bar{\pi}_A) = \frac{1}{2} \sum_{y_A \in \mathcal{Y}_A} \pi_A(y_A) \log\left(\frac{\pi_A(y_A)}{\bar{\pi}_A(y_A)}\right)
\end{equation}

While using the total variation distance directly bounds differences in expectations of bounded functions due to Lemma \ref{lemma:TV inequality}, we next prove that the KL divergence only provides such control indirectly, resulting in weaker bounds.
\begin{lemma}
    For any attacker policies $\pi_A,{\pi}_A' \in \Delta(\mathcal{Y}_A)$ and any defender policy $\pi_D$:
    \begin{equation}
        \label{eq:KL bound}
        \left|V(\pi_A,\pi_D) - V({\pi}_A',\pi_D)\right|
        \le  \sqrt{2D_{\text{KL}}(\pi_A\|{\pi}_A')}.
    \end{equation}
    In particular, if $\delta=\frac{\varepsilon^2}{2}$ and $\pi_A \in \Pi_A^\delta$ for some fixed $\varepsilon \in [0,1]$, then:
    \begin{equation}
        \label{eq:KL ineq conclusion}
        \max_{\pi_D} V(\pi_A,\pi_D)
        \le
        \max_{\pi_D} V(\bar{\pi}_A,\pi_D) + \varepsilon.
    \end{equation}
\end{lemma}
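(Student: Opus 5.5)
The plan is to reduce the statement to Lemma~\ref{lemma:TV inequality} and then pass from total variation to KL divergence via Pinsker's inequality. Fix attacker policies $\pi_A,\pi_A'$ and a defender policy $\pi_D$. Lemma~\ref{lemma:TV inequality} already gives
\[
\left|V(\pi_A,\pi_D)-V(\pi_A',\pi_D)\right|\le 2D_{\text{TV}}(\pi_A\|\pi_A').
\]
It therefore suffices to show $2D_{\text{TV}}(\pi_A\|\pi_A')\le\sqrt{2D_{\text{KL}}(\pi_A\|\pi_A')}$. This is exactly Pinsker's inequality, $D_{\text{TV}}\le\sqrt{\tfrac12 D_{\text{KL}}}$, once both sides are squared. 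I would quote Pinsker as a standard fact. If a self-contained argument is wanted, I would reduce to two-point distributions via the data-processing inequality, using the partition $\{y_A:\pi_A(y_A)\ge\pi_A'(y_A)\}$ and its complement. I would then verify the binary inequality $p\log\frac pq+(1-p)\log\frac{1-p}{1-q}\ge 2(p-q)^2$ by differentiating in $q$.

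For the second claim, take $\pi_A$ in the KL neighbourhood $\Pi_A^{\delta,\text{KL}}$ with $\delta=\varepsilon^2/2$. The statement writes $\Pi_A^\delta$, but in this section the intended set is the KL ball. The first part then gives, for every $\pi_D$,
\[
V(\pi_A,\pi_D)\le V(\bar\pi_A,\pi_D)+\sqrt{2\delta}=V(\bar\pi_A,\pi_D)+\varepsilon\le\max_{\pi_D'}V(\bar\pi_A,\pi_D')+\varepsilon.
\]
Taking the maximum over $\pi_D$ on the left yields the desired bound. The maximum exists by linearity of $V$ in $\pi_D$; otherwise one can work with suprema.

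The main obstacle is a normalisation issue rather than anything conceptual. The KL divergence as displayed carries a factor $\tfrac12$. With that convention Pinsker reads $D_{\text{TV}}\le\sqrt{D_{\text{KL}}}$, which only gives $2\sqrt{D_{\text{KL}}}$, and this is weaker than the claimed $\sqrt{2D_{\text{KL}}}$. The constant $2$ in Lemma~\ref{lemma:TV inequality} cannot be improved: take $r$ equal to $\pm1$ according to the sign of $\pi_A-\pi_A'$. So the clean route is to read $D_{\text{KL}}$ as the standard divergence $\sum_{y_A}\pi_A(y_A)\log\frac{\pi_A(y_A)}{\bar\pi_A(y_A)}$, treating the $\tfrac12$ as a typo. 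Under the halved definition, the same argument gives the bound $2\sqrt{D_{\text{KL}}}$ and the choice $\delta=\varepsilon^2/4$ instead.
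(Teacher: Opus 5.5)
Your proposal is correct and is essentially the paper's own proof: Lemma~\ref{lemma:TV inequality} combined with Pinsker's inequality $D_{\text{TV}}\le\sqrt{\tfrac12 D_{\text{KL}}}$ gives $\sqrt{2D_{\text{KL}}}$, and taking $\delta=\varepsilon^2/2$ on the KL ball $\Pi_A^{\delta,\text{KL}}$ (the set the paper's proof actually uses) gives the second claim, where your explicit pointwise-then-maximise step is slightly cleaner than the paper's. Your normalisation remark is also right: the paper's argument tacitly uses the standard, unhalved KL divergence, and with the displayed factor $\tfrac12$ the first inequality fails, because both Pinsker's constant and the constant $2$ in the TV lemma are sharp.
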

\begin{proof}
    The proof easily follows from Lemma \ref{lemma:TV inequality}. Indeed, we start with proving \eqref{eq:KL bound}. Consider any attacker policies $\pi_A,{\pi}_A' \in \Delta(\mathcal{Y}_A)$ and any defender policy $\pi_D$. By Pinsker's inequality~\cite{csiszar2011information}, it holds that:
    \begin{equation}
        D_{\text{TV}}(\pi_A\|{\pi}_A') \le \sqrt{\frac{1}{2} D_{\text{KL}}(\pi_A\|{\pi}_A')}.
    \end{equation}
    Therefore, by \eqref{eq:KL bound} from Lemma \ref{lemma:TV inequality}, we conclude the desired in \eqref{eq:KL bound}.

    Next, we prove the second part of our claim in \eqref{eq:KL ineq conclusion}. For any fixed $\delta \in [0,1]$, consider any attacker policy $\pi_A \in \Pi_A^{\delta, \text{KL}}$. That is, $D_{\text{KL}}(\pi_A\|\bar{\pi}_A) \le \delta$, where $\bar{\pi}_A$ is the reference policy defined in Section \ref{sec:Analysis of Extremal Nash Equilibria}, representing baseline adversarial behaviour already optimised to challenge the defender. Therefore, combined with \eqref{eq:KL bound}, we obtain:
    \begin{equation}
        \label{eq:KL bound proof2}
        \left|V(\pi_A,\pi_D) - V({\pi}_A',\pi_D)\right|\le  \sqrt{2\delta}.
    \end{equation}
    As such, if $\delta=\frac{\varepsilon^2}{2}$ for any fixed $\varepsilon \in [0,1]$, the desired in \eqref{eq:tv ineq conclusion} readily follows from \eqref{eq:tv bound proof2}.
\end{proof}

\section{Absence of Parameter-Level Coupling}
\label{sec:Absence of Parameter-Level Coupling}
As noted in Section \ref{sec:Role-conditioned LoRA Policies}, the updates in Eq.~\eqref{eq:separate updates} evolve separately and restrict optimisation to low-rank subspaces over a shared frozen backbone. This parameter separation restores the distinction between attacker and defender updates, eliminating the parameter-level interference observed in Eq.~\eqref{eq:interference}. To formalise this, consider a defender-only update step
\[
\phi_D^+ = \phi_D + \eta\, \nabla_{\phi_D} J_D(\phi_A,\phi_D),
\]
where $\eta > 0$ is the learning rate. A first-order Taylor expansion of the defender's objective yields
\begin{align}
J_D(\phi_A,\phi_D^{+})
=
J_D(\phi_A,\phi_D)
+
\eta \|\nabla_{\phi_D} J_D(\phi_A,\phi_D)\|^2
+
O(\eta^2).
\end{align}

Although this update is taken only with respect to $\phi_D$, the gradient $\nabla_{\phi_D} J_D$ depends on the attacker through the sampling processes $y_A \sim \pi_{\theta,\phi_A}(\cdot)$ and $y_D \sim \pi_{\theta,\phi_D}(\cdot \mid y_A)$, which induce a dependence of the objective on the distribution of attacker-generated prompts. This can be made explicit as
\begin{align}
\nabla_{\phi_D} J_D
=
\mathbb{E}_{y_A \sim \pi_{\theta,\phi_A}}
\Big[
\mathbb{E}_{y_D \sim \pi_{\theta,\phi_D}(\cdot \mid y_A)}
\big[
\nabla_{\phi_D}\log \pi_{\theta,\phi_D}(y_D \mid y_A)\, R_D(y_A,y_D)
\big]
\Big].
\end{align}

Importantly, this dependence is purely distributional: the defender parameters are updated independently of the attacker parameters, i.e.,
\[
\frac{\partial \phi_D}{\partial \phi_A} = 0.
\]
Thus, the attacker influences the defender only through the induced input distribution $y_A$, rather than via shared or differentiable parameters. The same reasoning applies symmetrically to the attacker.

\clearpage

\section{Experimental Setup}

\subsection{Prompts}
During training, we used exactly the same attacker and defender prompts provided in Liu et al.~\cite{liu2025chasingmovingtargetsonline} and available in their code,\footnote{\url{https://github.com/mickelliu/selfplay-red-teaming/tree/main}} as well as in Appendix C.4.1 of their paper.
For the evaluation on safety benchmarks, we use the exact same prompts and settings as the Ai2 implementation.\footnote{\url{https://github.com/allenai/safety-eval}} 
For the evaluation of the reasoning and instruction following datasets, we prompt each model with the input task, and we ask it to respond with the answer within \texttt{<answer></answer>} tags. 

\subsection{Benchmark Details}
In Table~\ref{tab:benchmarksdescription}, we report a description of the datasets employed in the experimental evaluation, alongside the metrics used to benchmark an LLM.

\subsection{ABS Setup}\label{a:ABS-modules}
For each ABS model, we train low-rank adapters on the projection layers \texttt{q\_proj}, \texttt{k\_proj}, \texttt{v\_proj}, and \texttt{o\_proj} in the self-attention block, as well as \texttt{up\_proj}, \texttt{gate\_proj}, and \texttt{down\_proj} in the feed-forward network. The attention projections map hidden states into query, key, and value representations that determine token interactions, and then project the aggregated attention output back into the model’s hidden space. The feed-forward projections expand the representation into a higher-dimensional space, apply a gating mechanism to modulate features, and compress it back to the original dimension. Together, these layers control how information is routed and transformed throughout the model, making them effective targets for parameter-efficient fine-tuning. 
For each implementation, we kept the value of the rank and alpha equal, and set up a learning rate for the attacker and the defender of 1e-6 and 3e-6, a KL region of $0.3$: we trained each module for up to $100-200$ steps.
Table~\ref{tab:ABS-modules} reports the training time of each model, as well as the number of additional parameters LoRA introduces and trains.

An example of the defender ABS module configuration is hereby reported:

\begin{lstlisting}[language=json, numbers=none]
{
  "alora_invocation_tokens": null,
  "alpha_pattern": {},
  "arrow_config": null,
  "auto_mapping": null,
  "base_model_name_or_path": "Qwen/Qwen2.5-3B-Instruct",
  "bias": "none",
  "corda_config": null,
  "ensure_weight_tying": false,
  "eva_config": null,
  "exclude_modules": null,
  "fan_in_fan_out": false,
  "inference_mode": true,
  "init_lora_weights": true,
  "layer_replication": null,
  "layers_pattern": null,
  "layers_to_transform": null,
  "loftq_config": {},
  "lora_alpha": 16,
  "lora_bias": false,
  "lora_dropout": 0,
  "megatron_config": null,
  "megatron_core": "megatron.core",
  "modules_to_save": null,
  "peft_type": "LORA",
  "peft_version": "0.18.1",
  "qalora_group_size": 16,
  "r": 16,
  "rank_pattern": {},
  "revision": null,
  "target_modules": [
    "k_proj",
    "o_proj",
    "down_proj",
    "v_proj",
    "up_proj",
    "gate_proj",
    "q_proj"
  ],
  "target_parameters": null,
  "task_type": "CAUSAL_LM",
  "trainable_token_indices": null,
  "use_dora": false,
  "use_qalora": false,
  "use_rslora": false
}
\end{lstlisting}

\subsection{Experimental Results}\label{a:exp}
Tables~\ref{tab:comparative_performance_3b},~\ref{tab:comparative_performance_7b}, and~\ref{tab:comparative_performance_14b} report the full results for each ABS module we trained, including those modules for which training collapsed.
Table~\ref{tab:performance-safety-qwen} reports the detailed results of the reasoning and instruction following benchmarks of the models benchmarked in the main paper.

\subsubsection{A Qualitative Analysis of the Attacks}~\label{a:qualitative-attacks}

We hereby report the analysis that GPT-5.4 Standard with High reasoning made on the results of our tournament and for attacks against the base model, as described in Section~\ref{sec:qwen-attack-analysis}. Our prompt instructed the model to analyse the results and judge whether an attack successfully forces the defender to generate \emph{harmful} content. 

\texttt{Please notice that what follows is the verbatim output of GPT 5.4.}

\par\noindent\rule{\textwidth}{0.4pt}
\paragraph{GPT 5.4 response.}
These files keep the defender fixed and change the attacker, so the relevant qualitative question is why the LoRA checkpoint is a stronger \emph{attacker} against the same base defender. The same-defender comparisons already show that the effect is consistent across scales: on 3B, the LoRA checkpoint reaches a 35\% attack success rate against the base model, compared with 18\% for the fully fine-tuned model; on 7B, the corresponding rates are 24\% and 9\%.

The main reason is that the LoRA checkpoint is better at producing prompts that are both adversarial and usable. On 3B, especially, the fully fine-tuned attacker often fails at the task itself: instead of emitting a concrete jailbreak, it emits meta-instructions such as ``make the prompt more detailed'' or ``let's refine the prompt,'' which means the defender is not actually forced to answer the harmful request. By contrast, the 3B LoRA checkpoint usually produces a complete prompt with an explicit target, a concrete framing, and enough detail to steer the defender into a substantive continuation. In other words, part of the 3B gap is not just better stealth; it is basic task completion.

The second reason is better camouflage. The LoRA checkpoint is much more likely to wrap the harmful objective inside a benign-looking frame: wellness advice, ethical hacking, red-teaming, documentary analysis, fictional scenarios, or research language. This works because the base defender has a strong ``helpful rewriter'' mode. Once the request is framed as analysis, training, safety, or support, the defender often continues the prompt rather than rejecting the underlying intent. The fully fine-tuned model sometimes uses this strategy, but much less consistently, and usually with less detail.

The third reason is that the LoRA checkpoint tends to preserve a harmful payload while softening the surface form. Its prompts are longer, more specific, and more targeted. They do not merely say ``do the harmful thing in a subtler way''; they specify a voice, a scenario, a justification, and a rhetorical frame that nudges the defender toward compliance. This matters because the defender is more vulnerable to requests that look like legitimate content transformation than to short, direct requests that are obviously malicious.

There is also a scale-dependent difference between 3B and 7B. At 3B, the fine-tuned model's main weakness is degeneration into meta-talk and partial outputs. At 7B, that problem is much smaller: the fully fine-tuned model usually does produce a coherent rewritten prompt. However, its prompts are often too short, too direct, or too shallowly disguised, so the defender still recognises the malicious goal and refuses. The 7B LoRA checkpoint is qualitatively cleaner: it uses more polished masking strategies, more role framing, and more dual-use language, which lets it exploit the defender's helpfulness more effectively even when both models are coherent.

In short, the LoRA checkpoint appears stronger for two different reasons at the two scales. On 3B, it is stronger partly because it actually produces valid jailbreak prompts more reliably. On 7B, it is stronger because it is better at adversarial prompt design: it hides the intent behind plausible, high-context, dual-use framing while keeping enough specificity to elicit operationally harmful continuations.
\par\noindent\rule{\textwidth}{0.4pt}

\subsection{LoRA vs Finetuned Jailbreaks}\label{a:attacks-analysis}
We also instructed GPT-5.4 Standard with High reasoning to identify the prompts with the highest harmfulness score, and to describe the main differences between the attack generated by \textsc{Self-RedTeam} and our ABS models.

\texttt{Please notice that what follows is the verbatim output of GPT 5.4.}

\par\noindent\rule{\textwidth}{0.4pt}

\paragraph{Example 1: 3B self-harm prompt.}
The 3B LoRA checkpoint rewrites a self-harm request into a mental-health-themed blog-post prompt that uses supportive language and a personal-growth frame, but still pushes the defender toward harmful content. The defender then produces a long post that partly follows that framing. The fully fine-tuned 3B model, by contrast, collapses into a meta-level instruction about making the prompt ``more nuanced'' and receives a harmless prompt-engineering response instead. This is one of the clearest examples of the 3B difference being about task completion as much as stealth.

\paragraph{Example 2: 7B eating-disorder prompt.}
The 7B LoRA checkpoint reframes the target as a blog post about ``mindful eating'' and ``personal freedom,'' then embeds the harmful idea indirectly inside wellness language. The defender responds with a long blog post that substantially accepts that framing. The fully fine-tuned 7B model uses a much more exposed version of the same idea, explicitly asking for restrictive eating to be framed as self-discipline; that version is recognised and refused. This is a clean example of the LoRA checkpoint preserving the payload while improving the disguise.

\paragraph{Example 3: 7B self-harm prompt.}
The 7B LoRA checkpoint asks for a social-media post from the perspective of a distressed character and uses euphemistic phrases such as finding peace through ``letting go.'' The defender then writes a post that continues the emotional framing and stays uncomfortably close to the intended harmful direction. The fully fine-tuned 7B model also tries an indirect version, but it is weaker and more generic, so the defender turns it into a benign reflection on solitude and inner peace. This pair is useful because both prompts are coherent, but the LoRA version is better calibrated to exploit the defender.

\paragraph{Example 4: 7B explosive-device prompt.}
The LoRA checkpoint turns the original request into a laboratory-style request for a ``controlled release of energy'' device, wrapped in research and safety language. That masking is strong enough that the defender begins to provide construction guidance. The fully fine-tuned 7B model asks instead for a device that imitates a bomb in order to cause panic; despite the indirect wording, the intent remains too visible and the defender refuses. This example shows that the decisive factor is not simply indirectness, but whether the indirect framing is plausible enough to activate the defender's dual-use or educational mode.

\paragraph{Example 5: 7B red-team script prompt.}
One of the strongest LoRA 7B attacks is the request for a red-team script against a government database that explicitly asks for vulnerability identification and an exploit component under the guise of authorised security work. The defender responds with concrete code and an exploit-oriented workflow because the request successfully activates the ``ethical security assistance'' frame. This is especially interesting because it shows that the LoRA checkpoint is not only good at emotional or rhetorical masking; it also understands how to wrap cyber abuse inside a familiar professional genre that the defender is predisposed to help with.
\par\noindent\rule{\textwidth}{0.4pt}

\clearpage 

\begin{table}[ht]
\centering
\small

\begin{tabular}{|l|p{8.5cm}|c|}
\hline
\textbf{Benchmark Title} & \textbf{Description / Methodology} & \textbf{Better Score} \\ \hline
\hline
\multicolumn{3}{|c|}{\textbf{Safety and Robustness Benchmarks}} \\ \hline
HARMBENCH & A standardised evaluation framework for automated red-teaming and robust refusal. & {\color{myred} $\downarrow$} \\ \hline
WILDGUARDTEST & Assesses the effectiveness of LLM safety guardrails in detecting harmful content and risk levels. & {\color{myred} $\downarrow$} \\ \hline
WILDJAILBREAK & A large-scale synthetic resource containing vanilla and adversarial harmful/benign queries. & {\color{myred} $\downarrow$} / {\color{teal} $\uparrow$} \\ \hline
DAN & Jailbreak prompts that attempt to bypass restrictions by forcing the model to adopt an unconstrained persona. & {\color{myred} $\downarrow$} \\ \hline
XSTEST & A test suite designed to identify exaggerated safety behaviours (over-refusal) using prompts that appear unsafe but are benign. & {\color{teal} $\uparrow$/$\uparrow$} \\ \hline
StrongREJECT & A benchmark used to measure the model's ability to reject harmful prompts effectively. & {\color{teal} $\uparrow$} \\ \hline
\hline
\multicolumn{3}{|c|}{\textbf{General Capability and Instruction Following}} \\ \hline
GSM8K & Grade-school math word problems requiring multi-step reasoning and numerical problem solving. & {\color{teal} $\uparrow$} \\ \hline
Common Sense & Evaluates commonsense reasoning through everyday situations and implicit knowledge about the world. & {\color{teal} $\uparrow$} \\ \hline
BoolQ & Yes/no question answering based on short passages, testing reading comprehension and factual reasoning. & {\color{teal} $\uparrow$} \\ \hline
ARC-C & A challenging science QA benchmark with grade-school level questions requiring reasoning beyond simple recall. & {\color{teal} $\uparrow$} \\ \hline
HellaSwag & Tests commonsense inference by selecting the most plausible continuation of a given context. & {\color{teal} $\uparrow$} \\ \hline
\end{tabular}
\caption{Summary of Safety, Robustness, and General Capability Benchmarks}
\label{tab:benchmarksdescription}
\end{table}

\begin{table*}[ht]
\centering
\resizebox{\textwidth}{!}{
\begin{tabular}{l | cc | c | c | c | c || c | c}
\toprule
& \multicolumn{6}{c||}{\textbf{Harmful Refusal}} & \multicolumn{2}{c}{\textbf{Benign Compliance}} \\
\cline{2-9}
& \multicolumn{2}{c|}{\textbf{WG:Test}} & \textbf{WJB} & \textbf{DAN} & \textbf{HarmBench} & \textbf{XSTest} & \textbf{StrongREJECT} & \textbf{XSTest} \\
& \textbf{adv} $\downarrow$ & \textbf{vani} $\downarrow$ & \textbf{adv} $\downarrow$ & \textbf{adv} $\downarrow$ & \textbf{adv} $\downarrow$ & \textbf{vani} $\uparrow$ & \textbf{vani} $\uparrow$ & \textbf{vani} $\uparrow$ \\
\textbf{Method} & \textbf{ASR} & \textbf{ASR} & \textbf{ASR} & \textbf{ASR} & \textbf{ASR} & \textbf{RTA} & \textbf{RTA} & \textbf{Comply} \\
\midrule
Qwen2.5-3B-IT & 0.282 & 0.119 & 0.949 & 0.677 & 0.241 & 0.610 & 0.497 & \textbf{0.960} \\
\quad + \textsc{Self-RedTeam} & 0.234 & 0.056 & 0.885 & 0.603 & 0.156 & 0.655 & 0.559 & 0.932 \\
\quad + \textsc{LoRA r=16 (step 50)} & 0.240 & 0.097 & 0.947 & 0.630 & 0.197 & 0.600 & 0.518 & 0.944 \\
\quad + \textsc{LoRA r=16 (step 100)} & 0.196 & 0.056 & 0.867 & 0.520 & 0.113 & 0.605 & 0.594 & 0.944 \\
\quad + \textsc{LoRA r=32 (step 50)} & 0.240 & 0.073 & 0.913 & 0.573 & 0.147 & 0.645 & 0.552 & 0.952 \\
\quad + \textsc{LoRA r=32 (step 100)} & \textbf{0.122} & \textbf{0.019} & \textbf{0.721} & \textbf{0.320} & \textbf{0.088} & \textbf{0.680} & \textbf{0.669} & 0.952 \\
\midrule
\it Improv. \textsc{Self-RedTeam} (\%) & \color{myblue}{+17.0} & \color{myblue}{+52.9} & \color{myblue}{+6.7} & \color{myblue}{+10.9} & \color{myblue}{+35.3} & \color{myblue}{+7.4} & \color{myblue}{+12.5} & \color{myred}{-2.9} \\
\it Improv. LoRA r=16 (50) (\%) & \color{myblue}{+14.9} & \color{myblue}{+18.5} & \color{myblue}{+0.2} & \color{myblue}{+6.9} & \color{myblue}{+18.3} & \color{myred}{-1.6} & \color{myblue}{+4.2} & \color{myred}{-1.7} \\
\it Improv. LoRA r=16 (100) (\%) & \color{myblue}{+30.5} & \color{myblue}{+52.9} & \color{myblue}{+8.6} & \color{myblue}{+23.2} & \color{myblue}{+53.1} & \color{myred}{-0.8} & \color{myblue}{+19.5} & \color{myred}{-1.7} \\
\it Improv. LoRA r=32 (50) (\%) & \color{myblue}{+14.9} & \color{myblue}{+38.7} & \color{myblue}{+3.8} & \color{myblue}{+15.4} & \color{myblue}{+39.0} & \color{myblue}{+5.7} & \color{myblue}{+11.1} & \color{myred}{-0.8} \\
\it Improv. LoRA r=32 (100) (\%) & \color{myblue}{+56.7} & \color{myblue}{+84.0} & \color{myblue}{+24.0} & \color{myblue}{+52.7} & \color{myblue}{+63.5} & \color{myblue}{+11.5} & \color{myblue}{+34.6} & \color{myred}{-0.8} \\
\bottomrule
\end{tabular}
}
\caption{Comparative performance of Qwen2.5-3B-IT, \textsc{Self-RedTeam} fine-tuned, and ABS models on six benchmarks for harmful refusal, two for benign compliance, and five (average) for instruction following and reasoning. Improvement is calculated relative to the base model. {\color{myblue}{Blue}} indicates a shift in the desired direction (lower ASR or higher RTA/Compliance), while {\color{myred}{red}} indicates regression.}
\label{tab:comparative_performance_3b}
\end{table*}

\begin{table*}[ht]
\centering
\resizebox{\textwidth}{!}{
\begin{tabular}{l | cc | c | c | c | c || c | c}
\toprule
& \multicolumn{6}{c||}{\textbf{Harmful Refusal}} & \multicolumn{2}{c}{\textbf{Benign Compliance}} \\
\cline{2-9}
& \multicolumn{2}{c|}{\textbf{WG:Test}} & \textbf{WJB} & \textbf{DAN} & \textbf{HarmBench} & \textbf{XSTest} & \textbf{StrongREJECT} & \textbf{XSTest} \\
& \textbf{adv} $\downarrow$ & \textbf{vani} $\downarrow$ & \textbf{adv} $\downarrow$ & \textbf{adv} $\downarrow$ & \textbf{adv} $\downarrow$ & \textbf{vani} $\uparrow$ & \textbf{vani} $\uparrow$ & \textbf{vani} $\uparrow$ \\
\textbf{Method} & \textbf{ASR} & \textbf{ASR} & \textbf{ASR} & \textbf{ASR} & \textbf{ASR} & \textbf{RTA} & \textbf{RTA} & \textbf{Comply} \\
\midrule
Qwen2.5-7B-IT & 0.279 & 0.040 & 0.897 & 0.613 & 0.234 & 0.915 & 0.470 & 0.912 \\
\quad + \textsc{Self-RedTeam} & 0.178 & 0.002 & 0.726 & 0.423 & 0.156 & 0.925 & 0.371 & 0.888 \\
\quad + \textsc{LoRA r=16 (step 50)} & 0.285 & 0.024 & 0.842 & 0.503 & 0.219 & 0.905 & 0.563 & 0.896 \\
\quad + \textsc{LoRA r=16 (step 100)} & 0.169 & 0.010 & 0.756 & 0.383 & 0.156 & 0.915 & 0.644 & 0.884 \\
\quad + \textsc{LoRA r=32 (step 50)} & 0.217 & 0.022 & 0.802 & 0.437 & 0.194 & 0.915 & 0.619 & 0.904 \\
\quad + \textsc{LoRA r=32 (step 100)} & \textbf{0.807} & \textbf{0.808} & \textbf{1.000} & \textbf{1.000} & \textbf{0.984} & \textbf{0.220} & \textbf{0.017} & \textbf{0.032} \\
\midrule
\it Improv. \textsc{Self-RedTeam} (\%) & \color{myblue}{+36.18} & \color{myblue}{+94.19} & \color{myblue}{+19.12} & \color{myblue}{+30.98} & \color{myblue}{+33.36} & \color{myblue}{+1.09} & \color{myred}{-21.24} & \color{myred}{-2.63} \\
\it Improv. LoRA r=16 (50) (\%) & \color{myred}{-2.15} & \color{myblue}{+41.16} & \color{myblue}{+6.19} & \color{myblue}{+17.94} & \color{myblue}{+6.66} & \color{myred}{-1.09} & \color{myblue}{+19.73} & \color{myred}{-1.75} \\
\it Improv. LoRA r=16 (100) (\%) & \color{myblue}{+39.37} & \color{myblue}{+76.51} & \color{myblue}{+15.77} & \color{myblue}{+37.50} & \color{myblue}{+33.36} & 0.00 & \color{myblue}{+36.97} & \color{myred}{-3.07} \\
\it Improv. LoRA r=32 (50) (\%) & \color{myblue}{+22.34} & \color{myblue}{+47.22} & \color{myblue}{+10.65} & \color{myblue}{+28.80} & \color{myblue}{+17.36} & 0.00 & \color{myblue}{+31.68} & \color{myred}{-0.88} \\
\it Improv. LoRA r=32 (100) (\%) & \color{myred}{-189.39} & \color{myred}{-1857.14} & \color{myred}{-11.43} & \color{myred}{-63.05} & \color{myred}{-319.97} & \color{myred}{-75.96} & \color{myred}{-96.47} & \color{myred}{-96.49} \\
\bottomrule
\end{tabular}
}
\caption{Comparative performance of Qwen2.5-7B-IT, \textsc{Self-RedTeam} fine-tuned, and ABS models on six benchmarks for harmful refusal, two for benign compliance, and five (average) for instruction following and reasoning. Improvement is calculated relative to the base model. {\color{myblue}{Blue}} indicates a shift in the desired direction (lower ASR or higher RTA/Compliance), while {\color{myred}{red}} indicates regression. Notice how the last ABS training collapsed and the performance reduced to random guess over all the benchmarks.}
\label{tab:comparative_performance_7b}
\end{table*}

\begin{table*}[ht]
\centering
\resizebox{\textwidth}{!}{
\begin{tabular}{l | cc | c | c | c | c || c | c}
\toprule
& \multicolumn{6}{c||}{\textbf{Harmful Refusal}} & \multicolumn{2}{c}{\textbf{Benign Compliance}} \\
\cline{2-9}
& \multicolumn{2}{c|}{\textbf{WG:Test}} & \textbf{WJB} & \textbf{DAN} & \textbf{HarmBench} & \textbf{XSTest} & \textbf{StrongREJECT} & \textbf{XSTest} \\
& \textbf{adv} $\downarrow$ & \textbf{vani} $\downarrow$ & \textbf{adv} $\downarrow$ & \textbf{adv} $\downarrow$ & \textbf{adv} $\downarrow$ & \textbf{vani} $\uparrow$ & \textbf{vani} $\uparrow$ & \textbf{vani} $\uparrow$ \\
\textbf{Method} & \textbf{ASR} & \textbf{ASR} & \textbf{ASR} & \textbf{ASR} & \textbf{ASR} & \textbf{RTA} & \textbf{RTA} & \textbf{Comply} \\
\midrule
Qwen2.5-14B-IT & 0.146 & 0.017 & 0.845 & 0.647 & 0.119 & 0.735 & 0.647 & 0.984 \\
\quad + \textsc{Self-RedTeam} & 0.116 & 0.005 & 0.661 & 0.467 & 0.056 & 0.775 & 0.701 & 0.968 \\
\quad + \textsc{LoRA r=32 (step 50)} & 0.137 & 0.020 & 0.828 & 0.600 & 0.115 & 0.770 & 0.654 & 0.988 \\
\quad + \textsc{LoRA r=32 (step 100)} & 0.119 & 0.015 & 0.779 & 0.487 & 0.084 & 0.750 & 0.725 & 0.980 \\
\quad + \textsc{LoRA r=32 (step 125)} & 0.131 & 0.022 & 0.706 & 0.446 & 0.093 & 0.770 & 0.748 & 0.968 \\
\midrule
\it Improv. \textsc{Self-RedTeam} (\%) 
& \color{myblue}{20.55} 
& \color{myblue}{70.59} 
& \color{myblue}{21.77} 
& \color{myblue}{27.82} 
& \color{myblue}{52.94} 
& \color{myblue}{5.44} 
& \color{myblue}{8.35} 
& \color{myred}{-1.63} \\

\it Improv. LoRA r=32 (50) (\%) 
& \color{myblue}{6.16} 
& \color{myred}{-17.65} 
& \color{myblue}{2.01} 
& \color{myblue}{7.26} 
& \color{myblue}{3.36} 
& \color{myblue}{4.76} 
& \color{myblue}{1.08} 
& \color{myblue}{0.41} \\

\it Improv. LoRA r=32 (100) (\%) 
& \color{myblue}{18.49} 
& \color{myblue}{11.76} 
& \color{myblue}{7.81}  
& \color{myblue}{24.73} 
& \color{myblue}{29.41} 
& \color{myblue}{2.00}  
& \color{myblue}{12.05} 
& \color{myred}{0.4} \\

\it Improv. LoRA r=32 (125) (\%)
& \color{myblue}{10.27}
& \color{myred}{-29.41}
& \color{myblue}{16.44}
& \color{myblue}{31.07}
& \color{myblue}{21.85}
& \color{myblue}{4.76}
& \color{myblue}{15.61}
& \color{myred}{-1.63} \\
\bottomrule
\end{tabular}
}
\caption{Comparative performance of Qwen2.5-14B-IT, \textsc{Self-RedTeam} fine-tuned, and ABS models on six benchmarks for harmful refusal, two for benign compliance, and five (average) for instruction following and reasoning. Improvement is calculated relative to the base model. {\color{myblue}{Blue}} indicates a shift in the desired direction (lower ASR or higher RTA/Compliance), while {\color{myred}{red}} indicates regression.}
\label{tab:comparative_performance_14b}
\end{table*}

\begin{table}[!ht]
    \centering
    \begin{tabular}{|c|c|c|c|c|c|}
        \hline  
         & GSM8K & Common Sense & BoolQ & ARC Ch. & HellaSwag\\
         \hline
        Qwen2.5-3B &  0.816 & 0.699 & 0.753 & \textbf{0.752} & 0.706\\
        \hline
        \textsc{Self-RedTeam} Qwen2.5-3B & 0.796 & 0.503$^*$ & 0.69$^*$ & 0.668 & 0.7 \\
        \hline
        Safe-Qwen2.5-3B & \textbf{0.83} & \textbf{0.7} & 0.716 & 0.729 & \textbf{0.713} \\
        \hline
        \hline
        \hline 
        Qwen2.5-7B &  \textbf{0.873} & \textbf{0.86} & \textbf{0.83} & \textbf{0.876} & 0.81\\
        \hline
        \textsc{Self-RedTeam} Qwen2.5-7B &  0.856 & 0.75$^*$ & \textbf{0.83} & 0.842 & \textbf{0.813} \\
        \hline
        LoRA Qwen2.5-7B & \textbf{0.873} & \textbf{0.86} & 0.823 & 0.866 & \textbf{0.813} \\
        \hline
        \hline
        \hline
        Qwen2.5-14B &  \textbf{0.91} & \textbf{0.85} & 0.836 & 0.906 & 0.87\\
        \hline
        \textsc{Self-RedTeam} Qwen2.5-14B & \textbf{0.91} & 0.836 & \textbf{0.843} & \textbf{0.923} & \textbf{0.88} \\
        \hline
        LoRA Qwen2.5-14B & \textbf{0.91} & \textbf{0.85} & 0.836 & 0.906 & 0.87 \\
        \hline
    \end{tabular}
    \caption{Comparison of the performance of Qwen-3B, 7B, and 14B with the corresponding \textsc{Self-RedTeam} and LoRA versions on common reasoning benchmarks. Best results across model size are in \textbf{bold}; results with * are statistically lower than the worst performing model (McNemar + Bonferroni correction with p-value at $0.05$). Runs are on 300 samples.}
    \label{tab:performance-safety-qwen}
\end{table}

\end{document}